\newtheorem{theorem}{Theorem}
\newtheorem{corollary}{Corollary}
\newtheorem{lemma}{Lemma}
\newtheorem{remark}{Remark}
\newtheorem{definition}{Definition}
\newtheorem{assumption}{Assumption}
\newtheorem{proposition}{Proposition}
\renewcommand{\t}{^{\mbox{\tiny\sf T}}}
\newcommand{\bremark}{\begin{remark}
\begin{rm}}
\newcommand{\eremark}{ \end{rm}\hfill \rule{1mm}{2mm}
\end{remark} }
\newcommand{\btheorem}{\begin{theorem} \begin{it}}
\newcommand{\etheorem}{\end{it} \hfill \rule{1mm}{2mm}
\end{theorem} }
\newcommand{\blemma}{\begin{lemma} \begin{it} }
\newcommand{\elemma}{ \end{it} \hfill\rule{1mm}{2mm}
\end{lemma} }
\newcommand{\bcorollary}{\begin{corollary} \begin{it} }
\newcommand{\ecorollary}{ \end{it} \hfill\rule{1mm}{2mm}
\end{corollary} }
\newcommand{\bdefinition}{\begin{definition} }
\newcommand{\edefinition}{ \hfill\rule{1mm}{2mm}
\end{definition} }
\newcommand{\bproposition}{\begin{proposition} }
\newcommand{\eproposition}{\hfill \rule{1mm}{2mm}
\end{proposition} }
\newcommand{\bexample}{\begin{example} \begin{rm}}
\newcommand{\eexample}{ \end{rm} \hfill\rule{1mm}{2mm}
\end{example} }
\newcommand{\bassumption}{\begin{assumption} }
\newcommand{\eassumption}{\hfill \rule{1mm}{2mm}
\end{assumption} }
\newcommand{\balgorithm}{\medskip\begin{algorithm} \rm}
\newcommand{\ealgorithm}{ \hfill \rule{1mm}{2mm}\medskip
\end{algorithm} }
\newcommand{\basm}{\begin{assumption} \begin{rm} }
\newcommand{\easm}{ \end{rm} \hfill\rule{1mm}{2mm}
\end{assumption} }
\begin{document}

\title{Spontaneous-Ordering Platoon Control for Multi- Robot Path Navigation Using Guiding Vector Fields}

\author{
\IEEEauthorblockN{Bin-Bin Hu,
Hai-Tao Zhang\IEEEauthorrefmark{1},
Weijia Yao, 
Jianing Ding and
Ming Cao\IEEEauthorrefmark{1}
}
}

\maketitle

\begin{abstract}
In this paper, we propose a distributed guiding-vector-field (DGVF) algorithm for a team of robots to form a {\it  spontaneous-ordering} platoon moving along
a predefined desired path in the $n$-dimensional Euclidean space. Particularly, by adding a path parameter as an additional virtual coordinate to each robot, the DGVF algorithm can eliminate the {\it singular points} where the vector fields vanish, and govern robots to approach a {\it closed} and even {\it self-intersecting} desired path.
Then, the interactions among neighboring robots and a virtual target robot through their virtual coordinates enable the realization of the desired platoon; in particular, relative parametric displacements can be achieved with arbitrary ordering sequences.
Rigorous analysis is provided to guarantee the global convergence to the {\it spontaneous-ordering} platoon on the common desired path from any initial positions. 2D experiments using three HUSTER-0.3 unmanned surface vessels (USVs) are conducted to validate the practical effectiveness of the proposed DGVF algorithm, and 3D numerical simulations are presented to demonstrate its effectiveness and robustness when tackling higher-dimensional multi-robot path-navigation missions and some robots breakdown.
\end{abstract}

\begin{IEEEkeywords}
Swarms, path planning for multiple mobile robots or agents, multi-robot systems, guiding vector fields
\end{IEEEkeywords}

\IEEEpeerreviewmaketitle
\section{Introduction}
Over the years, multi-robot path navigation has attracted increasing attention due to the rich applications in 
searching and rescue, monitoring and reconnaissance, and convey and escort~\cite{macwan2014multirobot,dunbabin2012robots,hu2021distributed2,alonso2015multi,hu2023cooperative}. 
In such a navigation problem, robots are generally governed by two terms: path-following control and multi-robot motion coordination. The former is to 
guide robots to accurately follow some desired paths, which can be achieved by projection-point \cite{samson1995control,aguiar2007trajectory}, line-of-sight (LOS) \cite{fossen2003line,rysdyk2006unmanned} and guiding-vector-field (GVF) methods \cite{kapitanyuk2017guiding,yao2020path}. The latter is to coordinate motions of robots subject to some geometric  constraints. In simple missions within open environments, these coordination constraints can be satisfied by  prescribing some fixed spatial orderings and distributions of robots, which refers to fixed-ordering coordination \cite{yao2019distributed}.

Among the works of multi-robot path navigation, those using fixed-ordering design have been widely explored in the literature.
As pioneering works, an
adaptive controller was developed in \cite{burger2009straight} to 
follow a desired straight-line path. A virtual structure was proposed in \cite{ghommam2010formation} to follow some sinusoidal paths.  
A pragmatic distributed protocol \cite{liu2020scanning} was designed to collectively follow some fitting curved paths. However, these works \cite{burger2009straight,ghommam2010formation,liu2020scanning} were restricted to simple open paths.
Later, it was extended to circles \cite{hu2021distributed1,hu2021bearing} and some other 2D closed curves \cite{zhang2007coordinated,nakai2013vector,de2017circular,doosthoseini2015coordinated}. 
For even more complex 3D paths, an output-regulation-based controller \cite{sabattini2015implementation} was developed to achieve multi-robot path navigation with periodic-changing closed paths in the 3D Euclidean space. Another work~\cite{pimenta2013decentralized} has utilized GVF to follow 3D specific-form paths. 
However, the aforementioned methods in \cite{burger2009straight,ghommam2010formation,liu2020scanning,doosthoseini2015coordinated,hu2021distributed1,hu2021bearing,zhang2007coordinated,nakai2013vector,de2017circular,sabattini2015implementation,pimenta2013decentralized} cannot cope with the desired paths containing {\it self-intersecting} points, which motivates a singularity-free GVF with an additional virtual coordinate in~\cite{yao2021distributed}. Therein, {\it self-intersecting} desired paths were transformed to nonself-intersecting ones in a 
higher-dimensional Euclidean space and then multi-robot path navigation was coordinated with the guaranteed global convergence. Later, such singularity-free GVF was extended for surface navigation with two additional virtual coordinates~\cite{yao2022guiding}.

Still, for more complicated missions in dynamic environments, the previous fixed-ordering design methodology is not ideal, which motivates a more efficient approach to achieve coordination with arbitrary spatial orderings, namely, {\it spontaneous-ordering} coordination to improve efficiency~\cite{sakurama2020multi}. Notably, {\it spontaneous-ordering} coordination does not predetermine the steady-state order of the robots, which implies that the order of the robots does not matter in the multi-robot coordination but only depends on the initial condition of robots. For instance, for maintenance tasks in narrow pipelines, robots must form a platoon  as quickly as possible according to proximity, which then leads to the arbitrary orderings. 
Note that, such {\it spontaneous-ordering} coordination may induce time-varying interaction topologies among robots, which will affect the performance of multi-robot path navigation.
In this pursuit, a distributed hybrid control law was developed in~\cite{lan2011synthesis} to
coordinate the robots to keep a constant parametric separation along the navigation paths. A multifunctional controller was proposed in \cite{reyes2014flocking} integrating flocking, formation regulation, and path following simultaneously. Although these two studies \cite{lan2011synthesis,reyes2014flocking} have tried to address the {\it spontaneous-ordering} coordination scenario, they only considered 
nonself-intersecting paths with local convergence in the 2D plane. The {\it  spontaneous-ordering} multi-robot path navigation with more challenging self-intersected paths and guaranteed global convergence still remain an open problem.

For the specific multi-robot platoon navigation task, a number of existing works also studied the string stability, which is closely related to the attenuation of external disturbances along the platoon \cite{ploeg2013controller}. The early works focused on the string stability for linear robots with a fixed communication topology \cite{ploeg2013lp,besselink2017string,dunbar2011distributed}.
 Later, it was extended to the vehicle platoon with nonlinear dynamics~\cite{monteil2019string,hu2020cooperative,mokogwu2022energy}, switching and uncertain topologies~\cite{xu2022stochastic,feng2022robust}, and even time delays~\cite{liu2020internal,qin2019experimental}. However, string stability in these works \cite{ploeg2013controller,ploeg2013lp,besselink2017string,dunbar2011distributed,xu2022stochastic,feng2022robust,liu2020internal,qin2019experimental,monteil2019string,mokogwu2022energy,hu2020cooperative} 
requires the robots to maneuver with fixed predecessor and follower neighbors (i.e., a fixed-ordering platoon), and restricts in most cases the movement of the platoon only in the 1D Euclidean space. Accordingly, it becomes an urgent yet challenging mission to design a {\it spontaneous-ordering} platoon method in higher-dimensional Euclidean space.

Inspired by the singularity-free GVF reported in \cite{yao2021distributed}, we design a distributed guiding-vector-field (DGVF) algorithm to govern a team of an arbitrary number of robots to form
a {\it spontaneous-ordering} platoon moving along a predefined desired path in the $n$-dimensional Euclidean space.  
Particularly, by adding a path parameter as an additional virtual coordinate to each robot, the DGVF algorithm can eliminate the singular points where the vector fields vanish, and govern robots to approach a {\it closed} and even {\it self-intersecting} desired path. Then, the interactions among neighboring robots and a virtual target robot through virtual coordinates lead to the realization of the desired platoon with an arbitrary ordering. The main contribution is summarized as follows.

\begin{enumerate}
 \item We propose a DGVF algorithm to enable robots to approach and maneuver along a {\it closed} and even {\it self-intersecting} desired path while keeping a platoon with an arbitrary ordering simultaneously.
 
 \item  We guarantee the global convergence to the {\it spontaneous-ordering} platoon on the desired path from any initial positions, and reduce communication and computation costs by transmitting only virtual coordinates among neighboring robots.

\item We establish a multi-USV navigation system and conduct 2D experiments with three HUSTER-0.3 USVs to demonstrate the practical effectiveness of the proposed DVGF algorithm. Moreover, we perform 3D numerical simulations to show its effectiveness and robustness when tackling higher-dimensional navigation missions and some robots breakdown.

\end{enumerate}

The technical novelty of this paper is three-fold. First of all, different from the previous GVF \cite{zhang2007coordinated,nakai2013vector,de2017circular,doosthoseini2015coordinated,sabattini2015implementation,pimenta2013decentralized,yao2021distributed,yao2022guiding} focusing on the fixed-ordering multi-robot path navigation, the present paper designs a DGVF algorithm by utilizing the time-varying interactions among neighboring robots and a virtual target robot through their virtual coordinates 
to address a more challenging {\it spontaneous-ordering} multi-robot path navigation problem. Secondly, the present paper guarantees the global convergence to the {\it  spontaneous-ordering} platoon in presence of strongly nonlinear couplings induced by the ordering flexibility. Thirdly, experiments with three HUSTER-0.3 USVs in a multi-USV navigation system are conducted to demonstrate the practical effectiveness of the proposed DGVF algorithm. Still worth mentioning is that, by using time-varying neighboring interactions, the present DGVF algorithm can even tackle the case when some robots breakdown whereas the previous GVF approaches \cite{zhang2007coordinated,nakai2013vector,de2017circular,doosthoseini2015coordinated,sabattini2015implementation,pimenta2013decentralized,yao2021distributed,yao2022guiding} do not work in such cases.

The remainder of this paper is organized as follows. Section~II introduces preliminaries and the formulation of the problem. The main technical results are elaborated in Section III. 2D experiments using USVs and 3D numerical simulations are both conducted in Section IV. Finally, conclusions are drawn in Section V. 

Throughout the paper, the real numbers and positive real numbers are denoted by $\mathbb{R},\mathbb{R}^+$, respectively. The $n$-dimensional Euclidean space is denoted by $\mathbb{R}^n$. The integer numbers are denoted by $\mathbb{Z}$. The notation $\mathbb{Z}_i^j$ represents the set $\{m\in \mathbb{Z}~|~i\leq m\leq j\}$. The Kronecker product is denoted by~$\otimes$. The $n$-dimensional identity matrix is represented by~$I_n$. The $N$-dimensional column vector consisting of all 1's is denoted by $\mathbf{1}_N$.

\section{Preliminaries}

\subsection{Higher-Dimensional GVF}
Suppose a desired path $\mathcal P$ in the $n$-dimension Euclidean space is characterized by the zero-level set
of the implicit functions $\phi(\sigma)$ \cite{seron1999feedback,yao2020vector},
\begin{align}
\label{inplicit_function}
\mathcal P:=\{ \sigma\in\mathbb{R}^n~|~\phi(\sigma)=0\},
\end{align}
where $\sigma\in\mathbb{R}^n$ are the coordinates and $\phi(\cdot): \mathbb{R}^n\rightarrow\mathbb{R}$ is twice continuously differentiable, i.e., $\phi(\cdot)\in\mathcal C^2$.
Unlike conventional methods \cite{samson1995control,aguiar2007trajectory} to measure the error between a point $p_0\in\mathbb{R}^n$ and the desired path $\mathcal P$ by $\mbox{dist}(p_0,\mathcal P)=\mbox{inf}\{\|p-p_0\|~|~p\in\mathcal P\}$, the implicit functions $\phi(\sigma)$ provide a more convenient way to measure the path-following errors with $\phi(p_0)$. 
However, there may exist some pathological situations, i.e., settling down of \big($\|\phi(p_0(t))\|$  to zero along the trajectory $p_0(t)$ does not necessarily imply that $\mbox{dist}(p_0(t),\mathcal P)$ converges to~$0$ as $t \to \infty$, see \cite{el2007passivity, yao2018robotic}\big), which can be excluded by the following assumption.

\begin{assumption}
\label{assp_error}
\cite{kapitanyuk2017guiding} For any given $\kappa>0$ and a point 
$p_0(t)$, one has that
$\inf\{\|\phi(p_0)\|: \mbox{dist}(p_0,\mathcal P)\geq \kappa\}>0.$
\end{assumption}

Assumption~\ref{assp_error} guarantees that the path-following errors $\|\phi(p_0)\|$ are utilized to measure “how close" a point $p_0$ is to the desired path $\mathcal P$, i.e., $\lim_{t\rightarrow\infty}\|\phi(p_0(t))\|=0\Rightarrow  \lim_{t\rightarrow\infty}\mbox{dist}(p_0(t),\mathcal P)=0$, which can be satisfied by using some
polynomial or trigonometric functions (see, e.g., \cite{rezende2018robust,goncalves2010vector,yao2018robotic}). 

\begin{figure}[!htb]
\centering
\includegraphics[width=\hsize]{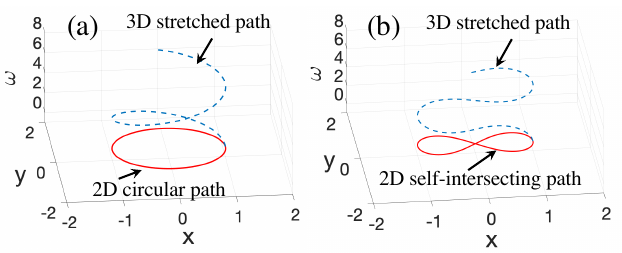}
\caption{ (a) The red solid line is the desired 2D circular path $\mathcal P^{phy}:=\{[\sigma_1, \sigma_2]\t\in\mathbb{R}^2~|~\sigma_1=\cos\omega, \sigma_2=\sin\omega, \omega\in\mathbb{R}\}$, whereas the blue dashed line is the corresponding “stretched”  desired
3D path $\mathcal P^{hgh}:=\{[\sigma_1, \sigma_2, \omega]\t\in\mathbb{R}^3~|~\sigma_1=\cos\omega, \sigma_2=\sin\omega\}$. (b) The red solid line is the desired 2D {\it self-intersecting} Lissajous path $\mathcal P^{phy}:=\{[\sigma_1, \sigma_2]\t\in\mathbb{R}^2~|~\sigma_1=\cos\omega/(1+0.5(\sin\omega)^2), \sigma_2=\cos\omega\sin\omega/(1+0.5(\sin\omega)^2), \omega\in\mathbb{R}\}$, whereas the blue dashed line is the corresponding “stretched” desired 3D path $\mathcal P^{hgh}:=\{[\sigma_1, \sigma_2, \omega]\t\in\mathbb{R}^2~|~\sigma_1=\cos\omega/(1+0.5(\sin\omega)^2), \sigma_2=\cos\omega\sin\omega/(1+0.5(\sin\omega)^2)\}$.}
\label{illustration_HGVF}
\end{figure}
Using the characterization of the desired path $\mathcal P$ in~\eqref{inplicit_function}, we are ready to introduce a higher-dimensional GVF to address the single-robot path navigation problem.

\begin{definition}
\label{definition_GVF}
(Higher-dimensional GVF)~\cite{yao2021singularity} Given the desired path $\mathcal P^{phy}$ in the $n$-dimension Euclidean space satisfying Assumption~\ref{assp_error} and parameterized by 
$$
\mathcal P^{phy}:=\{[\sigma_1, \cdots, \sigma_n]\t\in\mathbb{R}^n~|~\sigma_j=f_j(\omega), j\in\mathbb{Z}_1^n, \omega\in\mathbb{R}\}
$$ 
with the $j$-th cooridinate $\sigma_j$, the path parameter $\omega$, and the function $f_j \in \mathcal C^2$, there exists a corresponding desired path $\mathcal P^{hgh}$ in the higher-dimensional  Euclidean space
$$
\mathcal P^{hgh}:=\{\xi\in\mathbb{R}^{n+1}~|~ \phi_j(\xi)=0, j\in\mathbb{Z}_1^n\},
$$
where $\xi:=[\sigma_1,\dots, \sigma_n, \omega]\t$ 
are the generalized coordinates by regarding $\omega$ as an additional coordinate, and $\phi_{j}(\xi):=\sigma_j-f_j(\omega), j\in\mathbb{Z}_1^n$ 
are the implicit functions to measure the path-following errors. Since 
$\mathcal P^{phy}$ 
corresponds to the projection of 
$\mathcal P^{hgh}$ 
spanned on the first $n$ coordinates, a higher-dimensional GVF $\chi^{hgh}\in\mathbb{R}^{n+1}$ can be designed as follows,
\begin{align}
\label{eq_GVF}
\chi^{hgh}=\times (\nabla\phi_1, \cdots, \nabla\phi_n)-\sum_{j=1}^nk_{j}\phi_j\nabla\phi_j,
\end{align}
which can govern a robot to approach and maneuver along the desired path $\mathcal P^{phy}$ by projecting $\chi^{hgh}$ to the first $n$-dimensional Euclidean space. Here, $k_j\in\mathbb{R}^+$ is the gain, $\nabla\phi_j(\cdot): \mathbb{R}^{n+1}\rightarrow\mathbb{R}^{n+1}$ denotes the gradient of $\phi_j$ w.r.t. $\xi_j$ and $\times(\cdot)$ represents the wedge product~\cite{galbis2012vector}.  
\end{definition}

The higher-dimensional GVF $\chi^{hgh}$ in~\eqref{eq_GVF} is capable of providing a propagation direction 
along the desired path $\mathcal P^{phy}$ with the first term 
$\times (\nabla\phi_1, \cdots, \nabla\phi_n)\in\mathbb{R}^{n+1}$ 
orthogonal to all the gradients $\nabla\phi_j, j\in\mathbb{Z}_1^n$, and approaching the desired path $\mathcal P^{phy}$
with the second term of 
$\sum_{j=1}^nk_{j}\phi_j\nabla\phi_j$. 
In \cite{yao2021singularity}, it has been shown that the higher-dimensional GVF $\chi^{hgh}$ can eliminate 
the {\it singular points} (i.e., $\chi^{hgh}=0$) by adding the virtual coordinate $\omega,$ and hence guarantee the global convergence 
to even {\it self-intersecting} desired paths.

\begin{remark}
By transforming the path parameter~$\omega$ into an additional virtual coordinate, the desired closed and self-intersecting paths $\mathcal P^{phy}\in\mathbb{R}^n$ in $\mathbb{S}^1$ are “cut” and “stretched” into the higher-dimensional desired paths $\mathcal P^{hgh}\in\mathbb{R}^{n+1}$, and become unbounded and nonself-intersecting after introducing the additional dimension $\omega$ \cite{yao2021singularity}. Examples of such a “stretching” operation are illustrated in Fig.~\ref{illustration_HGVF},  where the desired 2D circular and self-intersecting paths $\mathcal P^{phy}$ have been transformed into the corresponding unbounded desired 3D paths $\mathcal P^{hgh}$, respectively. Moreover, the higher-dimensional GVF $\chi^{hgh}\in\mathbb{R}^{n+1}$ in Eq.~\eqref{eq_GVF} is designed for the “stretched” higher-dimensional desired paths $\mathcal P^{hgh}\in\mathbb{R}^{n+1}$, where $\chi^{hgh}$ is then projected into its first $n$ coordinates to govern the robot to approach and move along the original desired paths $\mathcal P^{phy}\in\mathbb{R}^n$.
\end{remark}

\subsection{Multi-Robot Path Navigation}
We consider a multi-robot system consisting of $N$ robots denoted by ${\cal V}=\{1,2,\dots, N\}$. Each one is described by the single integrator kinematics, 
\begin{align}
\label{kinetic_F}
 \dot{x}_i &=u_i + d_i, i\in\mathcal V,
\end{align}
where $x_i(t) :=[x_{i,1}, \dots, x_{i,n}]\t\in\mathbb{R}^n$ represent 
the positions and $u_i(t):=[u_{i,1}, \dots, u_{i,n}]\t\in\mathbb{R}^n$ 
the control inputs of the robot $i$, $d_i:=[d_{i,1}, \dots, d_{i,n}]\t\in\mathbb{R}^n$ the external disturbances, such as the state estimation errors, feedback-linearization errors, wind, and currents.
Note that the inputs $u_i$ in Eq.~\eqref{kinetic_F} can be regarded as the desired high-level guidance velocities when applied to practical robots with higher-order dynamics, which are thus applicable to various robots with the hierarchical control structure, such as unmanned aerial vehicles (UAVs), and unmanned surface vessels (USVs) \cite{hu2021bearing,rezende2018robust,yao2021singularity}.

Suppose the $i$-th desired path $\mathcal P_i^{phy}$ for robot $i, i\in \mathcal V,$ in the $n$-dimensional Euclidean space is described by, 
\begin{align}
\label{desired_path}
\mathcal P_i^{phy}:=&\{\sigma_i:=[\sigma_{i,1} , \dots,           
                    \sigma_{i,n}]\t\in\mathbb{R}^n~|~\nonumber\\
                   &\sigma_{i,j}=f_{i,j}(\omega_i), j\in\mathbb{Z}_1^n, \omega_i\in \mathbb{R}\},
\end{align}
where $\sigma_i$ are the coordinates of the desired path 
$\mathcal P_i^{phy}$, 
$f_{i,j}(\omega_i)\in\mathcal C^2, j\in\mathbb{Z}_1^n$ and $\omega_i$ 
are the parametric functions and the virtual coordinate of robot $i$, respectively. Here, $f_{i,j}(\omega_i)$ in Eq.~\eqref{desired_path} are in the same parametric form $f_{i,j}(\cdot)$ for all the robots~$\mathcal V$ but with different virtual coordinates $\omega_i, i\in\mathcal V$, which then make $\mathcal P_i^{phy}$ in Eq.~\eqref{desired_path} a common desired path for the multi-robot platoon task later.
Then, the sensing neighborhood $\mathcal N_i$ of robot $i$ is defined by
\begin{align}
\label{sening_neighbor}
\mathcal N_i(t):=\{k\in {{\cal V}}, k\neq i~|~|\omega_{i,k}(t)| < R\}
\end{align} 
with the sensing radius 
$R\in(r, \infty)$, the safe radius $r$ and $\omega_{i,k}:=\omega_i-\omega_k$. 
Since the relative parametric value $|\omega_{i,k}(t)|$ is time-varying, one has that $\mathcal N_i$ is time-varying as well, which can lead to a {\it spontaneous-ordering} platoon later whereas posing challenging issues in the stability analysis.

\begin{figure}[!htb]
\centering
\includegraphics[width=\hsize]{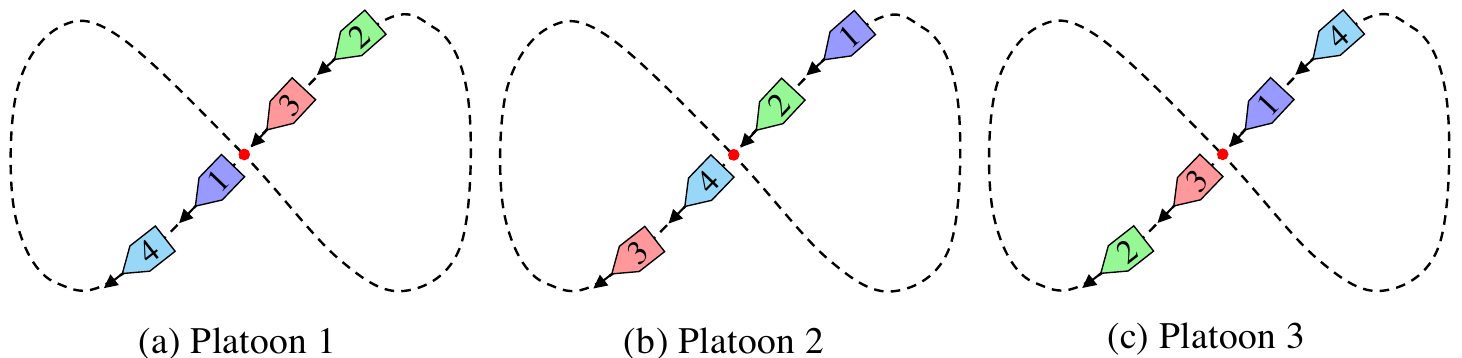}
\caption{An illustrative example where four robots (different colors) form three distinct-ordering platoons whereas moving along a desired 2D {\it self-intersecting} Lissajous path. (The red point denotes the self-interesting point of the path.)}
\label{definittion}
\end{figure}

Note that the common desired path $\mathcal P_i^{phy}$ has been scaled to each robot’s virtual coordinate $\omega_i$, which can stipulate the common scale to determine the neighborhood $\mathcal N_i(t)$ in \eqref{sening_neighbor}. An intuitive example of $\mathcal N_i(t)$ is 
that when $\mathcal P_i^{phy}$ in~\eqref{desired_path} is a line e.g., $f_{i,1}=\omega_i, f_{i,j}=0, j\in \mathbb{Z}_2^n, i\in\mathcal V$, 
the relative value $|\omega_{i,k}|$ becomes the 
$x$-axis distance, which implies that the definition of $\mathcal N_i$ in \eqref{sening_neighbor} is hence reasonable and feasible in practice.

Moreover, from Definition~\ref{definition_GVF}, $\mathcal P_i^{phy}$ in \eqref{desired_path}
can be transformed to the corresponding common desired path $\mathcal P_i^{hgh}$ in the higher-dimensional Euclidean space 
\begin{align}
\label{high_desired_path}
\mathcal P_i^{hgh}:=&\{[\sigma_{i,1} , \dots, \sigma_{i,n}, \omega_i]\t\in\mathbb{R}^{n+1}~|\nonumber\\
				&\sigma_{i,j}=f_{i,j}(\omega_i), j\in\mathbb{Z}_1^n\}.
\end{align}
Denoting $p_i:=[x_{i,1}, \dots, x_{i,n}, \omega_i]\t\in\mathbb{R}^{n+1}$ and substituting the positions $x_i=[x_{i,1}, \dots, x_{i,n}]\t$ of robot $i$ into $\mathcal P_i^{hgh}$ in~\eqref{high_desired_path},  the path-following errors $\phi_{i,j}(p_i)\in\mathbb{R}, \forall j\in\mathbb{Z}_1^n,$ between robot $i$ and the desired higher-dimensional path $\mathcal P_i^{hgh}$ are 
\begin{align}
\label{err_phi}
\phi_{i,j}(p_i)=&x_{i,j}-f_{i,j}(\omega_i),  j\in \mathbb{Z}_1^n.
\end{align}
Then, all the robots $\mathcal V$ achieve the desired multi-robot path navigation mission once the path-following errors $\phi_{i,j}(p_i), \forall j\in\mathbb{Z}_1^n,$ converge to zeros, i.e., 
\begin{align*}
\lim_{t\rightarrow\infty} \phi_{i,j}(p_i(t))=0,\forall i\in\mathcal V, j\in \mathbb{Z}_1^n.
\end{align*}

\subsection{Spontaneous-Ordering Platoon}
According to the parametric path $\mathcal P_i^{phy}$ in \eqref{desired_path} and the path-following errors $\phi_{i,j}(p_i)$ in \eqref{err_phi}, we are ready to introduce the {\it spontaneous-ordering} platoon for multi-robot path navigation problem.

\begin{definition}
\label{CPF_definition}
(Spontaneous-ordering platoon)
A group of robots~${\cal V}$ governed by \eqref{kinetic_F} collectively form a {\it spontaneous-ordering} platoon moving along a common desired path $\mathcal P_i^{phy}$~\eqref{desired_path} under Assumption~\ref{assp_error}, if the following claims are fulfilled,
\begin{align}
&1) \lim_{t\rightarrow\infty} \phi_{i,j}(p_i(t))=0,\forall i\in\mathcal V, j\in \mathbb{Z}_1^n, \nonumber\\
&2) \lim_{t\rightarrow\infty} \dot{\omega}_{i}(t)=\lim_{t\rightarrow\infty} \dot{\omega}_{k}(t)\neq 0, \forall i\neq k \in{\cal V}, \nonumber\\
&3)~r<\lim_{t\rightarrow\infty} |\omega_{s[k]}(t)-\omega_{s[k+1]}(t)|<R,  \forall k\in \mathbb{Z}_1^{N-1}, \nonumber\\
&4)~|\omega_{i,k}(t)|>r, \forall t\geq 0, \;  \forall i\neq k \in{\cal V},
\end{align} 
where $\dot{\omega}_i$ denotes the derivative of $\omega_i$, $R\in\mathbb{R}^+, r\in\mathbb{R}^+$ 
are the specified sensing and safe radius in \eqref{sening_neighbor}, respectively. Here, $\omega_{s[1]}<\omega_{s[2]}<\dots<\omega_{s[N]}$ 
are the states of the virtual coordinates with an arbitrary sequence 
$\{s[1], s[2], \dots, s[N]\}$ in an ascending order when $t\rightarrow\infty$.
\end{definition}

In Definition~\ref{CPF_definition}, Claim 1) indicates that all the robots converge to the common desired path $\mathcal P_i^{phy}$. Claim 2) implies that all the robots move along the common desired path and maintain relative parametric displacements $\omega_{i,k}, i\neq k\in \mathcal V$, i.e., the parametric displacement of the platoon is fixed. 
Claim~3) assures the ordering of the platoon is spontaneous with an arbitrary sequence. By properly selection of $R$ and $r$, it is only required that the limiting relative value of adjacent virtual coordinates $|\omega_{s[k]}(t)-\omega_{s[k+1]}(t)|$ can be set in an acceptable region (i.e., $r<|\omega_{s[k]}(t)-\omega_{s[k+1]}(t)|<R$), which is reasonable in practice. Claim 4) avoids the overlapping of virtual coordinates, which thus guarantees inter-robot collision avoidance. From Claims 3) and 4), the ordering flexibility of the platoon indicates that the steady-state order of the robots cannot be stipulated by the virtual coordinates $\omega_i$ in advance, and depends on the initial condition of the robots. It will pose challenges in the platoon analysis by \emph{time-varying} neighbor relations induced by such platoon ordering flexibility; in sharp comparison, the (desired) neighbor relationships in fixed-ordering platoons are usually time-invariant and thus the controls are easier to be designed, and implemented. An example of {\it spontaneous-ordering} platoon is illustrated in Fig.~\ref{definittion}, where the platoons 1, 2, 3 all fulfill the four claims in Definition~\ref{CPF_definition} but with distinct ordering sequences.

\subsection{Problem Formulation}
Let 
$\partial f_{i,j}(\omega_i):=\partial f_{i,j}(\omega_i)/\partial \omega_i$
be the derivative of 
$f_{i,j}(\omega_i)$ w.r.t.
$\omega_i$, 
one has that the gradient of 
$\phi_{i,j}(p_i)$ in \eqref{err_phi} along $p_i\in\mathbb{R}^{n+1}$ is calculated as follows
\begin{align}
\label{gradient_fi}
\nabla\phi_{i,j}(p_i)&:=[0, \dots, 1, \dots,
-\partial f_{i,j}(\omega_i)]\t\in\mathbb{R}^{n+1},
\end{align}
which implies that the time derivative of 
$\phi_{i,j}(p_i)$ is
\begin{align}
\label{representation_gradient}
\dot{\phi}_{i,j}(p_i)=&\nabla\phi_{i,j}(p_i)\t\dot{p}_i, i\in\mathcal V, j\in \mathbb{Z}_1^n.
\end{align}
Meanwhile, $u_i^{\omega}$ is defined as the desired input for the dynamic of virtual coordinate $\dot{\omega}_i$, i.e., 
\begin{align}
\label{dynamic_omega}
\dot{\omega}_i=u_i^{\omega}.
\end{align}
Let $\dot{\phi}_{i,j}=\dot{\phi}_{i,j}(p_i),\partial f_{i,j}=\partial f_{i,j}(\omega_i), i\in\mathcal V, j\in \mathbb{Z}_1^n$ for conciseness. 
Rewriting  $\Phi_i:=[\phi_{i,1}, \phi_{i,2}, \dots, \phi_{i,n}]\t$, 
$u_i:=[u_{i,1}, u_{i,2}, \dots, u_{i,n}]\t$ and combining Eqs.~\eqref{kinetic_F}, \eqref{representation_gradient} and \eqref{dynamic_omega} together yields
\begin{align}
\label{dynamic_path}
\begin{bmatrix}
\dot{\Phi}_i\\
\dot{\omega}_i
\end{bmatrix}
=
D_i 
\begin{bmatrix}
u_i+d_i\\
u_i^{\omega}
\end{bmatrix}
\end{align}
with
\begin{align*}
D_i=& \begin{bmatrix}                      
                       1 & 0 & \cdots & -\partial f_{i,1}\\
                       0 & 1 &\cdots & -\partial f_{i,2}\\
                       \vdots & \vdots & \ddots &\vdots\\
                       0 & \cdots &1 & -\partial f_{i,n}\\
                       0 & \cdots &0 & 1
                       \end{bmatrix}\in \mathbb{R}^{n+1\times n+1}.
\end{align*}

Now, we are ready to introduce the main problem addressed by this paper.

{\bf Problem 1}: ({\it Spontaneous-ordering} platoon in multi-robot path navigation task)
Design a distributed algorithm
\begin{align}
\label{pro_desired_signal}
\{u_{i}, u_i^\omega\}:=g(\phi_{i,1},\dots,\phi_{i,n}, 
\omega_i, \omega_k), i\in\mathcal V, k\in \mathcal N_i,
\end{align}
for the multi-robot system governed by \eqref{kinetic_F}, \eqref{dynamic_path} and \eqref{pro_desired_signal} to attain the {\it  spontaneous-ordering} platoon, as given in Definition~\ref{CPF_definition}.

\section{Main Technical Results}
Firstly, it follows from Eqs.~\eqref{eq_GVF},~\eqref{err_phi},~\eqref{gradient_fi} that the higher-dimensional GVF 
$\chi_i^{hgh}(x_{i,1},$ $ x_{i,2}, \dots, x_{i,n}, \omega)\in\mathbb{R}^{n+1}$ 
for robot $i$ is (see, e.g., \cite{yao2021singularity}),
\begin{align}
\label{desired_GVF}
\chi_i^{hgh}=&\times (\nabla\phi_{i,1},  \cdots, \nabla\phi_{i, n})-\sum_{j=1}^nk_{i,j}\phi_{i,j}\nabla\phi_{i,j}\nonumber\\							      
            =&\begin{bmatrix}
        	              (-1)^n \partial f_{i,1}-k_{i,1}\phi_{i,1} \\
	              \vdots\\
                      (-1)^n \partial f_{i,n}-k_{i,n}\phi_{i,n}\\
                      (-1)^n+\sum\limits_{j=1}^nk_{i,j}\phi_{i,j}\partial f_{i,j}
        \end{bmatrix}.        
\end{align}
It follows from the $\chi_i^{hgh}$ in \eqref{desired_GVF} that 
the DGVF algorithm for Problem~1 is designed as follows,
\begin{align}
\label{desired_law}
u_{i,j}=&(-1)^n \partial f_{i,j}-k_{i,j}\phi_{i,j} +\widehat{d}_{i,j},~
                \forall j\in \mathbb{Z}_1^n, \nonumber\\
u_i^\omega=&(-1)^n+\sum_{j=1}^nk_{i,j}\phi_{i,j}\partial f_{i,j}-
                c_i(\omega_i-\widehat{\omega}_i)+\eta_i,
\end{align}
where 
$k_{i,j}, c_i\in\mathbb{R}^+, i\in\mathcal V, j\in \mathbb{Z}_1^n$
are the corresponding gains, $\omega_i, \phi_{i,j}$, $\partial f_{i,j}, j\in\mathbb{Z}_1^n$
are given in \eqref{desired_path} and \eqref{dynamic_path}, respectively. $\widehat{d}_i:=[\widehat{d}_{i,1}, \dots, \widehat{d}_{i,n}]\t\in\mathbb{R}^n$ represents an additional well-designed observer to compensate for the external disturbances $d_i$ in Eq.~\eqref{kinetic_F} (refer to Remark~\ref{observer_disturbance} for more details).
$\widehat{\omega}_i$ is defined as the estimation of the target virtual coordinate $\omega^{\ast}$ for robot $i$, where $\omega^{\ast}$ is the corresponding virtual coordinate of a virtual target robot labeled $\ast$ moving on the desired path $\mathcal P_{\ast}^{phy}$ governed by the designed GVF $\chi_{\ast}^{hgh}$ in~\eqref{desired_GVF}. 
Since the virtual target robot $\ast$ is already moving on the common desired path $\mathcal P_{\ast}^{phy}$, one has that $\phi_{\ast,j}=0, \forall j\in\mathbb{Z}_1^n$, which implies that the derivative of $\omega^{\ast}$ satisfies 
\begin{align}
\label{target_velocity}
\dot{\omega}^{\ast}=(-1)^n+\sum_{j=1}^nk_{\ast,j}\phi_{\ast,j}f_{\ast,j}'=(-1)^n
\end{align}
as observed from Eq.~\eqref{desired_GVF}.

Further, $\eta_i$ in \eqref{desired_law} denotes the inter-agent repulsive term which satisfies 
\begin{align}
\label{de_eta}
\eta_i=\sum_{k\in \mathcal N_i} \alpha (|\omega_{i,k}|)     
        \frac{\omega_{i,k} }{|\omega_{i,k}|}
\end{align}
with 
$\omega_{i,k}:=\omega_i-\omega_k$, $\mathcal{N}_i$ given in \eqref{sening_neighbor}, and the continuous function 
$\alpha(s):(r, \infty)\rightarrow [0, \infty)$ 
(see e.g.~\cite{chen2019cooperative})
satisfying 
\begin{align}
\label{alpha}
\alpha(s)=0, \forall s\in[R, \infty),
\lim_{s\rightarrow r^{+}}\alpha(s)=\infty.
\end{align}
An illustrative example of 
$\alpha(s)$ is (see, e.g.~\cite{chen2019cooperative}),
\begin{equation}
\label{potential_alpha}
\alpha(s)=
\left\{
\begin{array}{llr}
\frac{1}{s-r}-\frac{1}{R-r} & r<s\leq R,\\
0 & s>R,
\end{array}
\right.
\end{equation}
where
$\alpha(s)$ 
is monotonically decreasing if 
$s\in(r, R]$ 
and equal~$0$ if 
$s\in(R, \infty)$. It implies that
$\alpha(s)$
is continuous in the domain 
$(r, \infty)$.

Next, we will prove that the multi-robot system 
governed by \eqref{kinetic_F}, \eqref{dynamic_path} and \eqref{desired_law} 
satisfies the property P1.

\begin{itemize}

\item[{\bf P1}:] Robots ${\cal V}$ achieve a {\it spontaneous-ordering} platoon in the multi-robot navigation task.
 
\end{itemize}

To this end, conditions C1-C5 are required.

\begin{itemize}

\item[{\bf C1}:] 
The initial positions and virtual coordinates of the robots satisfy
$\|x_{i}(0)-x_{k}(0)\|>0, |\omega_{i,k}(0)|>r, \; \forall i\neq k \in {\cal V}$.

\item[{\bf C2}:] 
The first and second derivatives of 
$f_{i,j}(\omega_i),i\in\mathcal V, j\in \mathbb{Z}_1^n$
are bounded.

\item[{\bf C3}:] 
The estimation $\widehat{\omega}_i$ 
converges to the target virtual coordinate 
$\omega^{\ast}$ exponentially, i.e., $\lim_{t\rightarrow\infty}\widehat{\omega}_i(t)-\omega^{\ast}(t)=0, 
i\in \mathcal V $, exponentially.

\item[{\bf C4}:] 
The total length 
$\mathcal L_i^{phy}$
of the common desired path $\mathcal P_i^{phy}$ is required to be great than the length of the platoon, i.e.,  
$\mathcal L_i^{phy}>\int_0^{NR}\sqrt{\sum_{j=1}^n \partial f_{i,j}^2(s)}ds$.

\item[{\bf C5}:] 
The external disturbances $d_i$ in \eqref{kinetic_F} and their first-order derivatives $\dot{d}_i$ are all bounded, i.e., $\|d_i\|\leq \beta_{i,1}, \|\dot{d}_i\|\leq \beta_{i,2}, i\in\mathcal V,$  for some positive constants $\beta_{i,1}, \beta_{i,2}\in\mathbb{R}^+$~\cite{peng2020output}.

\end{itemize}

\begin{remark}
Condition C1 is reasonable and necessary, which will be utilized to avoid the overlapping of robots.
Condition C2 is used to prevent the common desired path from changing too fast, 
see, e.g.,~\cite{yao2021distributed}, which is necessary 
for the global convergence analysis later. Condition C4 assures that there exists enough room of the common desired path to accommodate all the robots, 
otherwise the head robot in the platoon may collide with the 
tail robot, which fails to form a satisfactory platoon. 
\end{remark}

\begin{remark}
\label{remark_c3}
Condition C3 is the existence of a distributed estimator for 
the target virtual coordinate 
$\omega^{\ast}$ 
with a constant velocity $\dot{\omega}^{\ast}=(-1)^n$ in \eqref{target_velocity}. Such
a problem has been well studied in the literature, e.g., \cite{olfati2004consensus,hong2006tracking,zhao2013distributed} 
with a connected and undirected topology, and even can be easily achieved by broadcasting $\omega^{\ast}$ with a finite-time technique,
which is out of the main scope of this paper. 
To make the whole design complete, the distributed estimator endowing exponential convergence has the following structure, 
\begin{align}
\dot{\widehat{\omega}}_i
=&
\gamma_1\bigg(\sum_{j\in N_i^c}(\widehat{\omega}_{j}
-
\widehat{\omega}_{i})
+
b_i(\omega^{\ast}-\widehat{\omega}_{i})\bigg)+\widehat{\varsigma}_i,\nonumber\\
\dot{\widehat{\varsigma}}_i=&
\gamma_1\gamma_2\bigg(\sum_{j\in N_i^c}(\widehat{\omega}_{j}
-
\widehat{\omega}_{i})
+
b_i(\omega^{\ast}-\widehat{\omega}_{i})\bigg),
\end{align} 
where $\widehat{\omega}_i, \widehat{\varsigma}_i$ are the $i$-th robot's estimates of $\omega^{*}$ and $\dot{\omega}^{*}$, respectively, $\gamma_1, \gamma_2\in\mathbb{R}^+$ are the estimated gain,  $b_i=1$ 
if robot $i$ has access to $\omega^{\ast}$ and $b_i=0$, otherwise. 
$N_i^c, i\in{\cal V}$, represents the communication neighborhood set of the robot~$i$. Let $L\in\mathbb{R}^{N\times N}$
be the Laplacian matrix according to the neighboring set 
$N_i^c, i\in{\cal V}$ and $B:=\mbox{diag}\{b_1, b_2, \dots, b_n\}\in\mathbb{R}^{N\times N}$, one has that the smallest eigenvalue $\bar{\lambda}$ of the matrix $(L+B)$ satisfies $\bar{\lambda}>0$ with a connected communication topology and at least one robot has access to $\omega^{\ast}$.
Denote $\varrho:=[\chi\t, \zeta\t ]\t\in\mathbb{R}^{2N}$ with
$\chi:=[\widehat{\omega}_{1}, \widehat{\omega}_{2}, \dots, \widehat{\omega}_{N}]\t-\mathbf{1}_N\otimes \omega^{\ast}$
and 
$\zeta:=[\widehat{\varsigma}_{1}, \widehat{\varsigma}_{2}, \dots, \widehat{\varsigma}_{N}]\t-\mathbf{1}_N\otimes \dot{\omega}^{\ast}$,
and one has the closed-loop system is  
$\dot{\varrho}=A\varrho$ with
\begin{align*}
A=\begin{bmatrix}
-\gamma_1(L+H) & I_{N}\\
-\gamma_1\gamma_2(L+H) & 0
\end{bmatrix}.
\end{align*}
According to the conditions $\gamma_1>1/(4\gamma_2(1-\gamma_2^2)\bar{\lambda}), 1>\gamma_2>0$ in~\cite{hong2006tracking}, one has
that $\lim_{t\rightarrow\infty}\varrho(t)=0$ exponentially, which indicates that 
$\lim_{t\rightarrow\infty}\widehat{\omega}_i-\omega^{\ast}=0$ exponentially.
\end{remark}

\begin{remark}
\label{observer_disturbance}
Condition C5 is common in real applications. For generally bounded disturbances, there exist various works focusing on the disturbance observer $\widehat{d}_i$ for the compensation of $d_i$ in Eq.~\eqref{kinetic_F}, such as the extended state observers (ESO) and sliding mode observers (SMO), which can estimate the disturbances in finite time~\cite{peng2020output,gu2022disturbance}, i.e., $\lim_{t\rightarrow T_1}\{\widehat{d}_i(t)-d_i(t)\}=0$
with a constant time $T_1>0$. The design of such disturbance observers is out of the scope of this paper. Instead, we assume that the compensation of $d_i$ is achieved by adding a well-designed observer $\widehat{d}_i$ into the original inputs $u_i$, namely, $u_i\rightarrow u_i+\widehat{d}_i$ in DGVF \eqref{desired_law}, and then analyze the influence of estimated disturbance errors in Lemmas~\ref{lemma_finiteescape}-\ref{lemma_pathconvergence} later. Moreover, for constant disturbances, extensive simulations with no additional disturbance observers are shown in Figs.~\ref{distur_1_Lissajous_12}-\ref{Dis_Lissajous_performance2} to illustrate the quantitative influence of disturbances on the {\it spontaneous-ordering} platoon, which demonstrate that the proposed DGVF~\eqref{desired_law} can still guarantee the {\it spontaneous-ordering} platoon under small constant external disturbances.  
\end{remark}

Since the DGVF algorithm \eqref{desired_law} is not well defined at $\omega_{i,k}=0$ or $\omega_{i,k}=r$ because of $\eta_i$ in \eqref{de_eta}, it may exhibit a finite-time-escape behavior (i.e., $u_i^{\omega}(t)=\infty$) for the closed-loop system~\eqref{dynamic_path}. Therefore, we derive the main results in three steps for readers' convenience. In Step~1, we prevent the finite-time-escape behavior in the closed-loop system~\eqref{dynamic_path} (i.e., $\omega_{i,k}(t)\neq0, \omega_{i,k}(t)\neq r, \forall t>0$ and Claim 4)).
In Step~2, we prove that all the robots converge to and then maneuver along a common desired path (i.e., Claims 1)-2) in Definition~\ref{CPF_definition}). In Step~3, we prove the forming of the {\it spontaneous-ordering} platoon (i.e., Claim~3) in Definition~\ref{CPF_definition}).

\begin{lemma}
\label{lemma_finiteescape}
Under conditions C1, C3 and C5, a multi-robot system governed by \eqref{kinetic_F}, \eqref{desired_law} prevents the finite-time-escape behavior, i.e., $\omega_{i,k}(t)\neq0, \omega_{i,k}(t)\neq r, \forall t>0$.
\end{lemma}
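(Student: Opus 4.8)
The plan is to exhibit a barrier‑type Lyapunov function that diverges precisely as some relative coordinate $\omega_{i,k}=\omega_i-\omega_k$ approaches $r$, to show its derivative along the closed loop is bounded above by an affine function of itself, and then to invoke the comparison lemma so that it cannot blow up in finite time; this keeps every $|\omega_{i,k}|$ strictly above $r$ for all $t$, which, since $|\omega_{i,k}(0)|>r$ by C1, also forces $\omega_{i,k}(t)\neq 0$. First I would set up the reduced dynamics on the maximal interval $[0,T_{\max})$ of existence (the closed‑loop right‑hand side is locally Lipschitz on the open set where all $|\omega_{i,k}|>r$, so that $\eta_i$ in \eqref{de_eta} is smooth there): using $\dot\omega^{\ast}=(-1)^n$ from \eqref{target_velocity}, set $e_i:=\omega_i-\omega^{\ast}$ so that $\omega_{i,k}=e_i-e_k$ and the common drift drops out; note $\epsilon_i:=\widehat\omega_i-\omega^{\ast}\to0$ exponentially by C3, hence $|\epsilon_i|\le\bar\epsilon$; and treat the disturbance residual $\bar d_i$ entering after substitution as bounded on $[0,T_{\max})$ by C5. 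Plugging \eqref{desired_law} into \eqref{dynamic_path} then yields, with $F_i:=[\partial f_{i,1},\dots,\partial f_{i,n}]\t$, $K_i:=\mathrm{diag}(k_{i,1},\dots,k_{i,n})$ and $\psi_i:=F_i\t K_i\Phi_i$,
\begin{align*}
\dot\Phi_i&=-(I_n+F_iF_i\t)K_i\Phi_i+\bar d_i+F_i\big(c_i(e_i-\epsilon_i)-\eta_i\big),\\
\dot e_i&=\psi_i-c_i(e_i-\epsilon_i)+\eta_i .
\end{align*}

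The candidate I would use is
\begin{align*}
V:=\tfrac12\sum_{i\in\mathcal V}\big(\Phi_i\t\Phi_i+e_i^2\big)+\tfrac12\sum_{i\in\mathcal V}\sum_{k\neq i}\Psi(|\omega_{i,k}|),\qquad \Psi(s):=\int_s^{R}\alpha(\tau)\,d\tau ,
\end{align*}
extended by $\Psi\equiv0$ on $[R,\infty)$; by \eqref{alpha} (in particular for the profile \eqref{potential_alpha}) $\Psi$ is continuous, nonnegative, $C^1$ on $(r,\infty)$, satisfies $\Psi'=-\alpha$, and $\Psi(s)\to\infty$ as $s\to r^+$, and $V(0)<\infty$ by C1. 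The key computation is to differentiate $V$: using $\frac{d}{dt}|\omega_{i,k}|=\frac{\omega_{i,k}}{|\omega_{i,k}|}(\dot e_i-\dot e_k)$ and the antisymmetry $\omega_{k,i}=-\omega_{i,k}$, the barrier term telescopes to $-\sum_i\eta_i\dot e_i$, and substituting $\dot e_i$ produces the decisive term $-\sum_i\eta_i^2$. After collecting every contribution carrying a factor $\eta_i$, the derivative takes the form $\dot V=(\text{terms quadratic/bilinear in }\Phi_i,e_i,\bar d_i,\epsilon_i)-\sum_i\eta_i^2+\sum_i\eta_i\rho_i$ with $\rho_i=-F_i\t(I_n+K_i)\Phi_i+(1+c_i)e_i-c_i\epsilon_i$. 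Since $\|F_i\|$, hence $\|(I_n+F_iF_i\t)K_i\|$, is bounded by C2 and $\bar d_i,\epsilon_i$ are bounded by C5, C3, all the quadratic/bilinear terms are bounded by $c_1'V+c_2'$; and because $|\rho_i|$ is affine in $(\|\Phi_i\|,|e_i|,|\epsilon_i|)$, the Young estimate $\sum_i\eta_i\rho_i\le\frac12\sum_i\eta_i^2+\frac12\sum_i\rho_i^2$ absorbs the $\eta_i^2$ terms and leaves a further $c_1''V+c_2''$. Thus $\dot V\le c_1V+c_2$ on $[0,T_{\max})$.

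It remains to conclude: the comparison lemma gives $V(t)\le(V(0)+c_2/c_1)e^{c_1 t}-c_2/c_1<\infty$ on $[0,T_{\max})$, so $V$ is bounded on every bounded interval. If $T_{\max}<\infty$, then $\sup_{[0,T_{\max})}V<\infty$; since $\Psi(|\omega_{i,k}|)\le 2V$ and $\Psi(s)\to\infty$ as $s\to r^+$, each $|\omega_{i,k}(t)|$ stays $\ge r+\delta$ for some $\delta>0$, while $\|\Phi_i\|,|e_i|$ and, on this finite interval, $\omega^{\ast}$ — hence $\omega_i$ and, via \eqref{err_phi}, $x_i$ — stay bounded; the trajectory would then remain in a compact subset of the region where the vector field is locally Lipschitz, so it could be continued past $T_{\max}$, contradicting maximality. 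Hence $T_{\max}=\infty$, and for every $t\ge0$ the finiteness of $V(t)$ forces $\Psi(|\omega_{i,k}(t)|)<\infty$, i.e. $|\omega_{i,k}(t)|>r$; in particular $\omega_{i,k}(t)\neq r$ and $\omega_{i,k}(t)\neq 0$ for all $t>0$, which in addition establishes Claim 4) of Definition~\ref{CPF_definition}.

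The hard part is the repulsive term $\eta_i$: it is \emph{not} controlled by $V$ (for $\alpha(s)\sim (s-r)^{-1}$ one has $\eta_i^2\sim (s-r)^{-2}$, which dwarfs the logarithmic growth of $\Psi$), so the estimate cannot be closed by bounding $\eta_i$ directly. The whole argument hinges on the fact that differentiating the barrier produces \emph{exactly} $-\sum_i\eta_i^2$, which after a single Young split dominates every cross term $\eta_i\cdot(\text{affine in }\sqrt V)$; choosing $V$ and the change of variables $\omega_{i,k}=e_i-e_k$ so that this cancellation is clean, and checking $\Psi(r^+)=\infty$ for the repulsive profile in use, is the real content — everything else is routine boundedness bookkeeping.
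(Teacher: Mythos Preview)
Your argument is sound and follows the same barrier--Lyapunov strategy as the paper, but the mechanics differ in a way worth noting. The paper weights the quadratic part as $\tfrac12\sum_i(\Phi_i\t K_i\Phi_i + c_i\widetilde\omega_i^2)$ rather than your unweighted $\tfrac12\sum_i(\|\Phi_i\|^2+e_i^2)$; with these weights, every $F_i$-- and $\eta_i$--dependent cross term in $\dot V$ collapses into a single perfect square $a_i^2=(\Phi_i\t K_iF_i-c_i\widetilde\omega_i+\eta_i+\tfrac{e_i}{2})^2$, giving $\dot V\le -\sum_i(\tfrac12\Phi_i\t K_i^2\Phi_i+a_i^2)+\sum_i(\tfrac14 e_i^2+\tfrac12\|\widetilde d_i\|^2)$ \emph{without} invoking boundedness of $F_i$. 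Your Young split $\eta_i\rho_i\le\tfrac12\eta_i^2+\tfrac12\rho_i^2$ is correct, but bounding $\rho_i^2$ (which contains $F_i\t(I_n+K_i)\Phi_i$) by $c_1'V+c_2'$ needs $\|F_i\|$ bounded, i.e.\ condition~C2 --- not among the stated hypotheses of Lemma~\ref{lemma_finiteescape}. A second consequence: the paper's residual $\sum_i(\tfrac14 e_i^2+\tfrac12\|\widetilde d_i\|^2)$ is integrable (exponential/finite-time decay), so one gets a \emph{uniform} bound $V(t)\le V(0)+\mathrm{const}$; your Gronwall bound $V(t)\le Ce^{c_1t}$ is enough for the no-finite-escape claim here, but the uniform bound is precisely what the paper reuses in Lemma~\ref{lemma_pathconvergence} to apply Barbalat. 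In short, your proof works once C2 is admitted, while the paper's weighted $V$ buys both the removal of C2 from this lemma and the uniform estimate needed downstream.
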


\begin{proof}
See Appendix~\ref{Proof_lemma_1}.
\end{proof}

\begin{lemma}
\label{lemma_pathconvergence}
Under conditions C2 and C3, a multi-robot system governed by \eqref{kinetic_F}, \eqref{desired_law} converges to and then moves along the common desired path $\mathcal P_i^{phy}, i\in\mathcal{V} $ in Eq.~\eqref{desired_path}, i.e., 
$\lim_{t\rightarrow\infty} \phi_{i,j}(p_i(t))=0, \lim_{t\rightarrow\infty} \dot{\omega}_{i}(t)=\lim_{t\rightarrow\infty}  \dot{\omega}_{k}(t)\neq 0, \forall i\neq k \in{\cal V}, j\in \mathbb{Z}_1^n$.
\end{lemma}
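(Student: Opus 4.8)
The plan is to treat the closed loop as a feedback interconnection between the path-following error block $\{\Phi_i\}_{i\in\mathcal V}$ and the virtual-coordinate block $\{\omega_i\}_{i\in\mathcal V}$, and to extract both claims at once from a composite Lyapunov function together with Barbalat's lemma. First I would substitute the DGVF law~\eqref{desired_law} into~\eqref{dynamic_path}. Since the wedge-product term of $\chi_i^{hgh}$ is orthogonal to every $\nabla\phi_{i,l}$, the identity $\dot\phi_{i,j}=\nabla\phi_{i,j}^{\top}\dot p_i$ reduces to
\begin{align*}
&\dot\phi_{i,j}=-k_{i,j}\phi_{i,j}-\partial f_{i,j}\,\psi_i+\widetilde d_{i,j},\quad h_i:=\sum_{l=1}^n k_{i,l}\phi_{i,l}\,\partial f_{i,l},\\
&\psi_i:=\dot\omega_i-(-1)^n=h_i-c_i(\omega_i-\widehat\omega_i)+\eta_i,
\end{align*}
where $\widetilde d_{i,j}$ is the disturbance-compensation error (zero in the disturbance-free case and, via the observer of Remark~\ref{observer_disturbance}, vanishing; it plays no further role). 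I would then pass to the co-moving coordinate $y_i:=\omega_i-\omega^{\ast}$ --- recall $\dot\omega^{\ast}=(-1)^n$ by~\eqref{target_velocity} --- and set $\delta_i:=\widehat\omega_i-\omega^{\ast}$, which by Condition~C3 tends to $0$ exponentially, so $\dot y_i=\psi_i$. A short computation identifies $\eta_i$ as $-\nabla_{y_i}U_{\mathrm{rep}}(y)$, where $U_{\mathrm{rep}}(y):=\sum_{i<k}\Psi(|y_i-y_k|)$ and $\Psi(s):=\int_s^{R}\alpha(\tau)\,d\tau\ge 0$ is the $C^1$ barrier generated by $\alpha$ in~\eqref{alpha}; equivalently $\psi_i=h_i+c_i\delta_i-q_i$ with $q_i:=\nabla_{y_i}U$ and $U(y):=\sum_i\tfrac{c_i}{2}y_i^2+U_{\mathrm{rep}}(y)$.

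From Lemma~\ref{lemma_finiteescape} (and Condition~C2) I would import that the solution exists for all $t\ge 0$ and, more, that the gaps $|\omega_{i,k}(t)|$ stay bounded away from $r$, so $U_{\mathrm{rep}}(y(t))$ and $\eta_i(t)$ are bounded; hence $\Phi_i(t)$, $y_i(t)$, $\psi_i(t)$ are bounded, and with C2 (boundedness of $\partial f_{i,j}$, $\partial^2 f_{i,j}$ along the trajectory) so are $\dot\Phi_i$ and $\dot q_i$, i.e.\ $\Phi_i$ and $q_i$ are uniformly continuous. I would then use $V:=\tfrac12\sum_i\|\Phi_i\|^2+\lambda\,U(y)$ with a suitable weight $\lambda>0$. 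Differentiating along the closed loop and inserting $\dot y_i=h_i+c_i\delta_i-q_i$ produces the diagonal negative terms $-\sum_i\sum_j k_{i,j}\phi_{i,j}^2-\lambda\sum_i q_i^2$, an $O(\sum_i\|\Phi_i\|^2)$ term $-\sum_i h_ig_i$ (where $g_i:=\sum_j\phi_{i,j}\,\partial f_{i,j}$), cross terms of the form $q_i\cdot O(\|\Phi_i\|)$, and terms linear in the vanishing signals $\delta_i,\widetilde d_{i,j}$. Bounding the $\partial f_{i,j}$-factors by C2 and applying Young's inequality (this is where the gain magnitudes enter) one obtains $\dot V\le-\mu\big(\sum_i\|\Phi_i\|^2+\sum_i q_i^2\big)+\varepsilon(t)$ with $\mu>0$ and $\varepsilon(t)\to 0$ exponentially. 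As $V$ is bounded below, integration gives $\Phi_i,q_i\in\mathcal{L}_2$, and with the uniform continuity above Barbalat's lemma yields $\Phi_i(t)\to 0$ --- Claim 1) --- and $q_i(t)\to 0$. Then $\psi_i=h_i+c_i\delta_i-q_i\to 0$, because $h_i\to 0$ (as $\Phi_i\to 0$ with $\partial f_{i,j}$ bounded) and $\delta_i\to 0$; therefore $\dot\omega_i(t)=(-1)^n+\psi_i(t)\to(-1)^n$ for every $i$, which is $\lim_{t\to\infty}\dot\omega_i(t)=\lim_{t\to\infty}\dot\omega_k(t)=(-1)^n\neq 0$, i.e.\ Claim 2).

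The hard part is precisely the two-way coupling aggravated by the singularity of $\eta_i$: $\psi_i$ drives $\dot\Phi_i$ while $\Phi_i$ (through $h_i$) drives $\dot y_i$, so neither block settles on its own, and $\eta_i\to\infty$ as $\omega_{i,k}\to r$ precludes a blunt ISS argument. The plan circumvents this by leaning on Lemma~\ref{lemma_finiteescape} to ensure the barrier is never approached --- so $\eta_i$ is in fact a bounded signal along the trajectory --- by choosing the weight $\lambda$ so that the $\partial f_{i,j}$-weighted cross terms, whose size C2 controls, are absorbed by the diagonal negative terms, and by carrying $\delta_i$ (Condition~C3) and $\widetilde d_{i,j}$ as additive vanishing perturbations cleaned up by the integral/Barbalat step. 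The careful accounting in that cross-term estimate is the step I expect to be the most delicate.
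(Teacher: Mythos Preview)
Your overall strategy---a composite Lyapunov function pairing a quadratic in $\Phi_i$ with the attraction--repulsion potential $U(y)$, followed by Barbalat---is exactly the paper's. The gap is in the weighting of the $\Phi_i$ block. You take $\tfrac12\|\Phi_i\|^2$, so $\dot V$ contributes $-\Phi_i^{\top}K_i\Phi_i-g_i\psi_i$ with $g_i:=F_i^{\top}\Phi_i$; the paper takes $\tfrac12\Phi_i^{\top}K_i\Phi_i$, so $\dot V$ contributes $-\Phi_i^{\top}K_i^2\Phi_i-h_i\psi_i$ with $h_i=F_i^{\top}K_i\Phi_i$. Because the \emph{same} $h_i$ appears in $\dot{\widetilde\omega}_i=h_i-c_i\widetilde\omega_i+\eta_i+e_i$, the $K_i$-weighted choice makes the cross terms form a perfect square,
\[
-h_i^2-(-c_i\widetilde\omega_i+\eta_i)^2-2h_i(-c_i\widetilde\omega_i+\eta_i)-h_ie_i-(-c_i\widetilde\omega_i+\eta_i)e_i=-(h_i-c_i\widetilde\omega_i+\eta_i+\tfrac{e_i}{2})^2+\tfrac{e_i^2}{4},
\]
which is the paper's identity~\eqref{equal_proprty}; no residual cross term survives and hence no condition on the gains $k_{i,j}$ is needed.

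With your unweighted choice the mismatch $g_i\neq h_i$ destroys that cancellation. After inserting $\psi_i=h_i-q_i+c_i\delta_i$, your $\dot V$ contains, at every state with $q_i=0$, the pure $\Phi_i$-term $-\Phi_i^{\top}K_i\Phi_i-g_ih_i$, and this is \emph{not} sign-semidefinite when the gains are heterogeneous: take $n=2$, $F_i=(1,1)^{\top}$, $K_i=\mathrm{diag}(10,\,0.01)$, $\Phi_i=(1,-10)^{\top}$, which gives $-\Phi_i^{\top}K_i\Phi_i-g_ih_i=-11+(9)(9.9)=78.1>0$. Since the offending term is independent of $q_i$, no choice of the weight $\lambda$ can absorb it; your parenthetical ``this is where the gain magnitudes enter'' is therefore not cosmetic but a genuine extra hypothesis (roughly, $k_{\min}$ large relative to $\|F_i\|^2$ and the gain spread) that the lemma does not assume. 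The one-line repair is to replace $\tfrac12\|\Phi_i\|^2$ by $\tfrac12\Phi_i^{\top}K_i\Phi_i$ and set $\lambda=1$: then $g_i$ becomes $h_i$, the completion of the square goes through exactly, and your Barbalat step yields $\Phi_i\to 0$ and $q_i=c_i\widetilde\omega_i-\eta_i\to 0$, which is precisely the paper's Eq.~\eqref{condition} and gives Claim~2) as you argue.
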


\begin{proof}
From the definition of 
$\Omega$ in \eqref{value_replace}, 
one has that $\int_{0}^t\Omega(s)ds$ is monotonic. 
Then, it follows from Eqs.~\eqref{dot_V2}, \eqref{value_replace} that
\begin{align*}
\int_{0}^t\Omega(s)ds
\geq 
V(t)-\sum_{i\in{\cal V}}\int_{0}^t\bigg\{\frac{e_i(s)^2}{4}+\frac{\|\widetilde{d}_i(s)\|^2}{2}\bigg\}ds-V(0).
\end{align*}
Since the term 
$-\sum_{i\in{\cal V}}\int_{0}^t\big\{{e_i(s)^2}/{4}+{\| \widetilde{d}_i(s)\|^2}/{2}\big\}ds$
is lower bounded, and $V(0)$ and $V(t)$ are both bounded in Lemma~\ref{lemma_finiteescape}, one has 
$\int_{0}^t\Omega(s)ds$
is lower bounded as well, which implies that 
$\int_{0}^t\Omega(s)ds$ 
has a finite limit as $t\rightarrow\infty$.

Meanwhile, since  
$V(t)$ is bounded in Lemma~\ref{lemma_finiteescape}, it follows from Eq.~\eqref{V_1} that 
$\Phi_i, \widetilde{\omega}_i, \eta_i$ 
are all bounded. Combining with the boundedness of the first and second 
derivatives of 
$f_{i,j}(\omega_i), i\in{\cal V}, j\in\mathbb{Z}_1^n$
in condition C2, one has that 
$\dot{\Omega}$
is bounded as well, which implies that $\Omega$ in \eqref{value_replace} is uniformly
continuous in $t$. Then, it follows from Barbalat’s lemma \cite{khalil2002nonlinear} that
\begin{align}
\lim_{t\rightarrow\infty}\Omega(t)=0. 
\end{align}
Since
$a_i^2\geq0,\Phi_{i}\t K_iK_i\Phi_{i}\geq0, 
k_{i,j}>0, i\in\mathcal V, j\in\mathbb{Z}_1^n$ 
in Eqs.~\eqref{replace_val},~\eqref{value_replace}, 
one has
\begin{align}
\lim_{t\rightarrow\infty} a_i(t)= 0, 
\lim_{t\rightarrow\infty} \Phi_{i}(t)=\mathbf{0}_n,
\end{align}
which further implies $\lim_{t\rightarrow\infty} \phi_{i,j}(p_i(t)) =0, i\in\mathcal V, j\in\mathbb{Z}_1^n$, i.e., Claim 1) in Defintion~\ref{CPF_definition}.

Moreover, since $a_i$ in \eqref{replace_val} contains 
$\Phi_{i}(t), e_i(t)$ 
which both approach zeros when $t\rightarrow \infty$, one has that
\begin{align}
\label{condition}
\lim_{t\rightarrow\infty}c_i\widetilde{\omega}_i(t)-\eta_i(t) =0. 
\end{align}
It then follows from Eqs.~\eqref{dynamic_path_4} and \eqref{convergence_e} that
$\lim_{t\rightarrow\infty}\dot{\widetilde{\omega}}_i(t)=0$. 
From the fact $\dot{\omega}^{\ast}=(-1)^n$ and $\widetilde{\omega}_i=\omega_i-\omega^{\ast}$ in Eqs.~\eqref{target_velocity} and~\eqref{dynamic_path_4}, one has 
that $\lim_{t\rightarrow\infty} \dot{\omega}_{i}(t)=\lim_{t\rightarrow\infty} \dot{\omega}_{k}(t)\neq 0, \forall i\neq k \in{\cal V}$, i.e., the Claim 2) in Defintion~\ref{CPF_definition}. The proof is thus completed.
\end{proof}

\begin{remark}
From Lemmas~\ref{lemma_finiteescape} and \ref{lemma_pathconvergence}, the prevention of the finite-time-escape behavior and the global convergence of the robots to the common desired path can still be guaranteed in the presence of exponentially vanishing estimation errors and external disturbances in conditions C3 and C5 simultaneously. Moreover, the quantitative influence of the constant external disturbances on the {\it spontaneous-ordering} platoon is also demonstrated by numerical simulations in Section~\ref{simulation_23D} later.
\end{remark}

\begin{lemma}
\label{lemma_oderingflexible}
Under condition C4, a multi-robot system governed by \eqref{kinetic_F}, \eqref{desired_law} guarantees the {\it spontaneous-ordering} platoon, i.e., $r<\lim_{t\rightarrow\infty} |\omega_{s[k]}(t)-\omega_{s[k+1]}(t)|<R,  \forall k\in \mathbb{Z}_1^{N-1}$.
\end{lemma}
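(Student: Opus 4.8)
The plan is to obtain Claim~3) of Definition~\ref{CPF_definition} in three movements: (i) freeze the spatial ordering of the virtual coordinates and keep it away from the collision radius; (ii) import from the proof of Lemma~\ref{lemma_pathconvergence} the asymptotic balance $c_i\widetilde{\omega}_i-\eta_i\to 0$; and (iii) rule out an adjacent gap as large as $R$ by an averaging (centroid) contradiction.

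For (i), by Lemma~\ref{lemma_finiteescape} one has $\omega_{i,k}(t)\neq 0$ and $\omega_{i,k}(t)\neq r$ for all $t>0$ and $i\neq k$; since $t\mapsto\omega_{i,k}(t)$ is continuous and $|\omega_{i,k}(0)|>r>0$ by C1, the sign of $\omega_{i,k}(t)$ never changes, so the sorting permutation $\{s[1],\dots,s[N]\}$ with $\omega_{s[1]}(t)<\cdots<\omega_{s[N]}(t)$ is independent of $t$; relabel so $s[k]=k$. I would then use that the repulsive potential $\tfrac12\sum_{i}\sum_{k\in\mathcal N_i}\Psi(|\omega_{i,k}|)$ --- where $\Psi$ is the antiderivative of $-\alpha$ normalized by $\Psi(R)=0$, so that $\Psi\ge 0$, $\eta_i=-\sum_{k\in\mathcal N_i}\nabla_{\omega_i}\Psi(|\omega_{i,k}|)$, and $\Psi(s)\to\infty$ as $s\to r^{+}$ by~\eqref{alpha} --- is a summand of the Lyapunov function shown bounded along trajectories in Lemmas~\ref{lemma_finiteescape}--\ref{lemma_pathconvergence}. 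Boundedness of that summand yields $\inf_{t\ge 0}|\omega_{i,k}(t)|>r$, hence $\liminf_{t\to\infty}|\omega_{k}(t)-\omega_{k+1}(t)|>r$ for every $k\in\mathbb{Z}_1^{N-1}$.

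For (ii)--(iii), recall from the proof of Lemma~\ref{lemma_pathconvergence} that $\Phi_i(t)\to\mathbf{0}_n$, $\dot{\widetilde{\omega}}_i(t)\to 0$ and $c_i\widetilde{\omega}_i(t)-\eta_i(t)\to 0$, with $\widetilde{\omega}_i(t),\eta_i(t)$ bounded and $\widetilde{\omega}_i:=\omega_i-\omega^{\ast}$. Since~\eqref{sening_neighbor} is symmetric and $\omega_{i,k}=-\omega_{k,i}$, a pair $\{i,k\}$ contributes opposite terms to $\eta_i$ and $\eta_k$; hence for any $\mathcal S\subseteq\mathcal V$ none of whose members has a neighbor outside $\mathcal S$, $\sum_{i\in\mathcal S}\eta_i\equiv 0$, so $\sum_{i\in\mathcal S}c_i\widetilde{\omega}_i(t)\to 0$. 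Now I would argue by contradiction: suppose $\limsup_{t\to\infty}\big(\omega_{k^\ast+1}(t)-\omega_{k^\ast}(t)\big)\ge R$ for some $k^\ast$, and pick $t_m\to\infty$ along which this gap tends to $g_\infty\ge R$ and (by boundedness) each $\widetilde{\omega}_i(t_m)$ converges to $\widetilde{\omega}_i^{\infty}$, so $\widetilde{\omega}_1^{\infty}\le\cdots\le\widetilde{\omega}_N^{\infty}$ with $\widetilde{\omega}_{k^\ast+1}^{\infty}-\widetilde{\omega}_{k^\ast}^{\infty}=g_\infty\ge R$. Split $\mathcal V$ into $\mathcal V_L=\{1,\dots,k^\ast\}$ and $\mathcal V_R=\{k^\ast+1,\dots,N\}$: every cross gap $\omega_j(t_m)-\omega_i(t_m)$ ($i\in\mathcal V_L$, $j\in\mathcal V_R$) tends to $\widetilde{\omega}_j^{\infty}-\widetilde{\omega}_i^{\infty}\ge g_\infty\ge R$, and since $\alpha$ is continuous and vanishes on $[R,\infty)$ the cross-block part of each $\eta_i$ tends to $0$ along $t_m$, so in the limit $\mathcal V_L$ and $\mathcal V_R$ each satisfy the hypothesis above and $\sum_{i\in\mathcal V_L}c_i\widetilde{\omega}_i^{\infty}=\sum_{i\in\mathcal V_R}c_i\widetilde{\omega}_i^{\infty}=0$. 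But the $c$-weighted mean of $\mathcal V_L$ is $\le\widetilde{\omega}_{k^\ast}^{\infty}<\widetilde{\omega}_{k^\ast+1}^{\infty}\le$ the $c$-weighted mean of $\mathcal V_R$, so both cannot vanish --- a contradiction. Hence $\limsup_{t\to\infty}|\omega_{s[k]}(t)-\omega_{s[k+1]}(t)|<R$ for all $k$, which with (i) gives $r<|\omega_{s[k]}(t)-\omega_{s[k+1]}(t)|<R$ for all large $t$. Since the platoon's parametric span then obeys $\omega_N-\omega_1=\sum_{k=1}^{N-1}(\omega_{k+1}-\omega_k)<(N-1)R<NR$, Condition~C4 guarantees it covers less than one traversal of the closed path $\mathcal P_i^{phy}$, so the parametric ordering agrees with the physical ordering and the platoon does not overlap its own tail.

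The hardest part will be upgrading ``$\liminf>r$ and $\limsup<R$'' to the genuine limit asserted in the statement, i.e.\ showing $\widetilde{\omega}(t)$ (equivalently each gap) converges. I would do this by observing that asymptotically $\dot{\widetilde{\omega}}_i=-c_i\widetilde{\omega}_i+\eta_i+o(1)$ is, up to a vanishing perturbation, the negative gradient of $U(\widetilde{\omega})=\sum_i\tfrac{c_i}{2}\widetilde{\omega}_i^{2}+\tfrac12\sum_i\sum_{k\in\mathcal N_i}\Psi(|\omega_{i,k}|)$, which is nonnegative, radially unbounded on the admissible region $\{|\omega_{i,k}|>r\}$ (on which the trajectory is precompact by (i)), and real-analytic on each open cell where the collection $\{\mathcal N_i\}$ is frozen; a LaSalle/gradient-inequality argument then forces $\widetilde{\omega}(t)$ to converge to a single critical point, on which the centroid computation above already shows every adjacent gap lies in $(r,R)$. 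Book-keeping the switches of $\mathcal N_i(t)$ as the trajectory crosses cell boundaries is the delicate ingredient; if one only needs the two-sided bound to hold eventually rather than as a limit, movements (i)--(iii) already suffice.
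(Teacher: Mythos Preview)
Your proposal is correct and reaches the same conclusion, but the contradiction argument for the upper bound is genuinely different from the paper's.

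The paper proceeds by a three-case split on the location of $\omega^{\ast}$ relative to the gap $[\omega_{s[l]},\omega_{s[l+1]}]$ assumed to be of size $\geq R$. In each case it looks at a \emph{single} robot on the near side of the gap (e.g.\ robot $s[l]$ when $\omega_{s[l]}<\omega_{s[l+1]}\leq\omega^{\ast}$), notes that this robot can have neighbors only on one side, so $\eta_{s[l]}\geq 0$, and computes directly that $-c_{s[l]}\widetilde{\omega}_{s[l]}+\eta_{s[l]}\geq c_{s[l]}(\omega_{s[l+1]}-\omega_{s[l]})\geq c_{s[l]}R>0$, contradicting~\eqref{condition}. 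Your argument instead sums the balance relations over the whole left block and the whole right block, uses the antisymmetry of $\eta$ to annihilate the intra-block contributions, and obtains the contradiction from a $c$-weighted centroid comparison. The paper's route is shorter and more elementary once one sees the case split; your averaging route avoids the case analysis entirely and is agnostic to where $\omega^{\ast}$ sits, which is a cleaner structural statement.

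You are also more careful on a point the paper elides: the paper asserts that the adjacent gaps have limits because $\dot{\omega}_i-\dot{\omega}_k\to 0$, but vanishing of the derivative of a bounded function does not by itself force convergence. Your subsequence formulation delivers $\limsup<R$ and $\liminf>r$ without assuming convergence, and your sketched gradient-flow/LaSalle argument (using that the $\widetilde{\omega}$-dynamics are, asymptotically, a perturbed negative gradient of the radially-unbounded potential $U$) is the right tool to upgrade these to genuine limits; the switching of $\mathcal N_i$ across cell boundaries is indeed the part that requires the most care. Finally, your use of the bounded Lyapunov potential to get $\inf_t|\omega_{i,k}(t)|>r$ (hence a strict limiting inequality) is also more explicit than the paper, which simply quotes $|\omega_{i,k}(t)|>r$ from Lemma~\ref{lemma_finiteescape}; note that both you and the paper are implicitly assuming $\int_{r}^{R}\alpha(\tau)\,d\tau=\infty$, which holds for the example~\eqref{potential_alpha} but is not stated as a standing hypothesis on $\alpha$.
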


\begin{proof}
From the fact $\lim_{t\rightarrow\infty} \dot{\omega}_{i}(t)=\lim_{t\rightarrow\infty} \dot{\omega}_{k}(t)\neq 0, \forall i\neq k\in \mathcal{V}$
in Lemma~\ref{lemma_pathconvergence}, one has that the limiting 
relative value of 
$\omega_i, i\in{\cal V}$ 
against any 
$\omega_k, k\neq i$ 
is time-invariant with an arbitrary sequential ordering 
$\{s[1], s[2], \dots, s[N]\}$ 
in an ascending order, which satisfies $\omega_{s[1]}<\omega_{s[2]}<\dots<\omega_{s[N]}$.

Meanwhile, since
$|\omega_{i,k}(t)|>r, \forall i\neq k \in{\cal V}$ in Lemma~\ref{lemma_finiteescape}, 
one has that
$$
|\omega_{s[i],s[i+1]}|>r, i=1, \dots, n-1.
$$
Next, we will prove the condition of 
$|\omega_{s[i],s[i+1]}|<R, i=1, \dots, n-1$, 
by contradiction. With the loss of generality, we assume that there exists at least one pair of adjacent robots labeled $s[l],s[l+1]$ such that $|\omega_{s[l],s[l+1]}|\geq R$. 
Then, the contradiction is analyzed by the following three cases.

Case 1:  
$\omega_{s[l]}<\omega_{s[l+1]}\leq\omega^{\ast}$. 
As for the robot $s[l]$, one has that 
$-c_i(\omega_{s[l]}-\omega^{\ast})=-c_i\widetilde{\omega}_{s[l]}>0$.
Due to the assumption of
$|\omega_{s[l],s[l+1]}|\geq R$,
one has that 
$|\omega_{s[l],s[j]}|> R, j=l+1, \dots, n$,
which implies that robot $s[l]$ may only have neighbors satisfying $|\omega_{s[l],s[k]}|<R, k=l-1, \dots, 1$. 
It follows from the definition of 
$\eta_i$ in \eqref{de_eta} that $\eta_{s[l]}>0$,
which implies the limiting values $-c_i\widetilde{\omega}_{s[l]}+\eta_{s[l]}$ satisfy
\begin{align*}
-c_i\widetilde{\omega}_{s[l]}+\eta_{s[l]}
=&
c_i(\omega^{\ast}-{\omega}_{s[l]})+\eta_{s[l]}\nonumber\\
\geq& 
c_i(\omega_{s[l+1]}-{\omega}_{s[l]})+\eta_{s[l]}\nonumber\\
\geq&
c_iR>0.
\end{align*}
It contradicts Eq.~\eqref{condition}.

Case 2: 
$\omega^{\ast}\leq\omega_{s[l]}<\omega_{s[l+1]}$. 
As for robot $s[l+1]$, the contradiction is similar to robot 
$s[l]$ in case 1, one has that $-c_i\widetilde{\omega}_{s[l+1]}+\eta_{s[l+1]}\leq -c_iR<0$, which contradicts Eq.~\eqref{condition} as well.

Case 3: 
$\omega_{s[l]}<\omega^{\ast}<\omega_{s[l+1]}$.
As for robot $s[l]$, the contradiction is the same as case 1. 
As for robot $s[l+1]$, the contradiction is the same as case 2, 
of which are both omitted.

According to the contradiction of the cases $1,2,3$, one has that  $|\omega_{s[i],s[i+1]}|<R, i=1, \dots, n-1$.
Then, it is concluded that 
$r<|\omega_{s[i],s[i+1]}|<R, i=1, \dots, n-1$, i.e., Claim 3) in Definition~\ref{CPF_definition}. 
The proof is thus completed.
\end{proof}

\begin{remark}
The {\it spontaneous-ordering} property is achieved by the 
attraction of the target virtual coordinate $\omega^{\ast}$ and the repulsion among virtual coordinates $\omega_{i,k}$, of which both
finally reach a balance in one dimension (i.e., virtual coordinate $\omega$) and thus form the 
{\it spontaneous-ordering} platoon. Therein, the steady orderings of the platoon, however, are unknown in advance, which are distributively calculated during the multi-robot path-navigation process.
\end{remark}

\begin{theorem}
\label{theo_platoon}
A multi-robot system governed by \eqref{kinetic_F} and the DGVF algorithm \eqref{desired_law}
achieves the property P1, under the conditions C1, C2, C3, C4 and C5.
\end{theorem}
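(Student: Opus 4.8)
The plan is to obtain Theorem~\ref{theo_platoon} by assembling Lemmas~\ref{lemma_finiteescape}--\ref{lemma_oderingflexible}, since property P1 is by definition the conjunction of the four claims in Definition~\ref{CPF_definition}, and these three lemmas collectively certify all of them. First I would invoke Lemma~\ref{lemma_finiteescape}: under conditions C1, C3 and C5 the closed-loop system~\eqref{dynamic_path} driven by the DGVF law~\eqref{desired_law} exhibits no finite-time-escape, so $\omega_{i,k}(t)\neq 0$ and $\omega_{i,k}(t)\neq r$ for all $t>0$ and all $i\neq k\in\mathcal V$; in particular $|\omega_{i,k}(t)|>r$, which is exactly Claim~4). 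This step must come first because the repulsive term $\eta_i$ in~\eqref{de_eta} is only well defined while $\omega_{i,k}\notin\{0,r\}$, so existence of the solution on $[0,\infty)$ and its confinement to the admissible set must be established before any asymptotic statement can be made.

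Next I would apply Lemma~\ref{lemma_pathconvergence}: using the boundedness of the Lyapunov function already obtained inside Lemma~\ref{lemma_finiteescape}, together with condition C2 (bounded first and second derivatives of $f_{i,j}$) and condition C3 (exponentially vanishing estimation error $\widehat{\omega}_i-\omega^{\ast}$), a Barbalat-type argument yields $\lim_{t\to\infty}\phi_{i,j}(p_i(t))=0$ for all $i\in\mathcal V,\ j\in\mathbb{Z}_1^n$ and $\lim_{t\to\infty}\dot\omega_i(t)=\lim_{t\to\infty}\dot\omega_k(t)=(-1)^n\neq 0$ for all $i\neq k$, which are precisely Claims~1) and~2): all robots reach the common desired path $\mathcal P_i^{phy}$ and traverse it with a common, nonzero parametric speed, so the relative parametric displacements freeze. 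Finally I would invoke Lemma~\ref{lemma_oderingflexible}, which takes as input the frozen ordered limiting displacements from Lemma~\ref{lemma_pathconvergence} and the lower bound $|\omega_{i,k}|>r$ from Lemma~\ref{lemma_finiteescape}, and---under condition C4 that the path is long enough to accommodate the whole platoon---rules out $|\omega_{s[k]}-\omega_{s[k+1]}|\geq R$ through the attraction-toward-$\omega^{\ast}$ versus repulsion contradiction, giving $r<\lim_{t\to\infty}|\omega_{s[k]}(t)-\omega_{s[k+1]}(t)|<R$ for all $k\in\mathbb{Z}_1^{N-1}$, i.e. Claim~3). Collecting the four claims establishes P1 under C1--C5.

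The main obstacle is not the final bookkeeping but verifying that the three lemmas chain together without circularity: Lemma~\ref{lemma_oderingflexible} silently consumes the conclusions of both Lemma~\ref{lemma_finiteescape} and Lemma~\ref{lemma_pathconvergence}, so one must check that the Lyapunov/invariance argument behind Lemma~\ref{lemma_finiteescape} never presupposes the steady-state ordering that only Lemma~\ref{lemma_oderingflexible} delivers, and that the vanishing perturbations from C3 and C5 are treated consistently across all three proofs, e.g.\ the disturbance-compensation residual $\widetilde d_i$ enters each Lyapunov estimate as an integrable or exponentially decaying term rather than as a term whose bound depends on the not-yet-established platoon. Once this compatibility is confirmed, the union C1--C5 is exactly what the three lemmas jointly require, and the theorem follows.
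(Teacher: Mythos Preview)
Your proposal is correct and matches the paper's own proof, which simply states that the theorem follows directly from Lemmas~\ref{lemma_finiteescape}--\ref{lemma_oderingflexible}. Your additional care in noting the logical order (Lemma~\ref{lemma_finiteescape} first to ensure well-posedness, then Lemma~\ref{lemma_pathconvergence}, then Lemma~\ref{lemma_oderingflexible}) and the non-circular dependence among the lemmas is more explicit than the paper's one-line proof, but the approach is identical.
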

\begin{proof}
It follows from Lemmas~\ref{lemma_finiteescape}-\ref{lemma_oderingflexible} directly.
\end{proof}

\begin{remark}
Different from the string stability in previous platoon works \cite{ploeg2013controller,ploeg2013lp,besselink2017string,dunbar2011distributed,xu2022stochastic,feng2022robust,liu2020internal,qin2019experimental,monteil2019string,mokogwu2022energy,hu2020cooperative} which requires the robots to maneuver with fixed predecessor and follower neighbors (i.e., a platoon in terms of fixed-ordering string), the proposed DGVF~\eqref{desired_law} can handle time-varying neighbor relationships (i.e., the predecessor and follower of the robots cannot be uniquely determined and the string of the platoon is time-varying), which then enables the robots to form a {\it  spontaneous-ordering} platoon in the higher-dimensional Euclidean space ($n\geq 2$). So far, the string stability cannot be analyzed in the present {\it spontaneous-ordering} platoon with such time-varying predecessor and follower, which will be investigated in future work.
\end{remark}

\begin{remark}
The unwinding phenomenon commonly encountered in the rigid-robot attitude tracking problem, refers to the situation where a robot, whose attitude is represented by a quaternion, might perform an unnecessary large-angle maneuver, even if the initial attitude is close to the desired attitude \cite{dong2021anti1}. However, such an unwinding phenomenon is less relevant in this paper, because the proposed DGVF \eqref{desired_law} is designed and treated as the high-level desired guidance velocities (i.e., desired attitude) for simple single-integrator robots in Eq.~\eqref{kinetic_F}, rather than the low-level attitude tracking with rigid body dynamics. The “stretching” operation of the DGVF~\eqref{desired_law} in Fig.~\ref{illustration_HGVF} shows the unwinding effect in the end, and the robots may take a long way around the closed path to get into the platoon. We notice that some rigorous anti-unwinding techniques have been explored 
\begin{figure}[!htb]
\centering
\includegraphics[width=7cm]{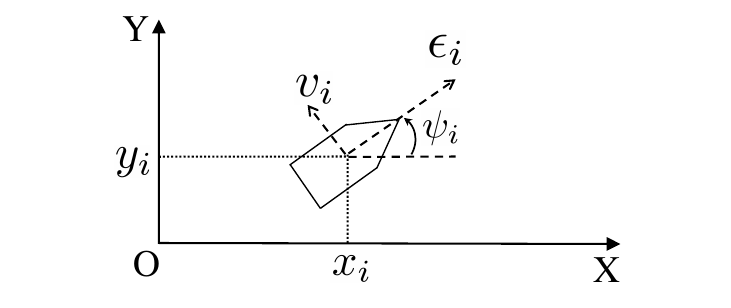}
\caption{ Illustration of the USV kinematics.}
\label{USV_dynamic}
\end{figure}
in the literature, such as the modified rodrigues parameters (MRPs) and sliding mode control (SMC)~\cite{dong2021anti1,dong2021anti2}, which can be seamlessly embedded in the low-level attitude tracking module with the desired attitude provided by the high-level DGVF \eqref{desired_law}.
\end{remark}

\section{2D Experimental Results and 3D Simulations}
In this section, we validate the effectiveness and robustness of the DGVF algorithm \eqref{desired_law} by 2D experiments using three HUSTER-0.3 USVs and 3D numerical simulations.

\subsection{Accommodating the DGVF to USV's Dynamics }
Since the DGVF~algorithm~\eqref{desired_law} provides high-level reference tracking velocities rather than low-level control signals when encountering robots with high-order dynamics, it applies to any robots whose guidance velocities can be exponentially tracked with well-designed low-level motor control signals. 
In what follows, we will first introduce the accommodating of the DGVF algorithm \eqref{desired_law} to the USVs.
The kinematics of USV $i$ in the Cartesian coordinates~\cite{hu2021bearing} are, 
\begin{align}
\label{kinetic_USV}
 \dot{x}_i &=\epsilon_i\cos\psi_i-v_i\sin\psi_i, \nonumber\\
\dot{y}_i &=\epsilon_i\sin\psi_i+v_i\cos\psi_i, \nonumber\\
\dot{\psi}_i &=r_i
\end{align}
with the positions $q_i(t)=[x_i(t), y_i(t)]\t\in \mathbb{R}^2$, the yaw angle  $\psi_i(t) \in \mathbb{R}$ in the Cartesian coordinate, and $\epsilon_i(t), v_i(t), r_i(t) \in \mathbb{R}$ the surge, the sway and the yaw velocities of USV $i$ in the USV coordinate, respectively, as shown in Fig.~\ref{USV_dynamic}.

The dynamics of USV $i$ are described by a practical model (see e.g., \cite{Liubin2018surounding}) 
\begin{align}
\label{kinematic_F}
\dot{\epsilon}_i &=l_1\epsilon_i+l_2v_i r_i+l_3\tau_{i,1}, \nonumber\\
\dot{r}_i &=l_4r_i+l_5\tau_{i,2}, \nonumber\\
\dot{v}_i &=l_6v_i+l_7\epsilon_i r_i,
\end{align}
where $l_1, l_2, l_3, l_4, l_5, l_6, l_7\in \mathbb{R}$ are the identified parameters, and $\tau_{i,1}, \tau_{i,2}\in \mathbb{R}$ the actuator inputs of USV~$i$.
It follows from Eq.~\eqref{kinetic_USV} that $\dot{x}_i, \dot{y}_i$ can be rewritten in a compact form,
\begin{align}
\label{part_kinetic_F}
\begin{bmatrix}
\dot{x}_i\\
\dot{y}_i
\end{bmatrix}
=
\begin{bmatrix}
\cos\psi_i & -\sin\psi_i\\
\sin\psi_i & \cos\psi_i
\end{bmatrix}
\begin{bmatrix}
\epsilon_i\\
v_i
\end{bmatrix}.
\end{align}

Analogously, substituting Eq.~\eqref{part_kinetic_F} into~the closed-loop system~\eqref{dynamic_path} yields
\begin{align}
\label{dynamic_path_1}
\begin{bmatrix}
   \dot{\phi}_{i,1}\\
   \dot{\phi}_{i,2}\\
   \dot{\omega}_i\\
\end{bmatrix} =& 
		      \begin{bmatrix}                      
                       1 & 0 & -\partial f_{i,1}(\omega_i)\\
                       0 & 1 & -\partial f_{i,2}(\omega_i)\\
                       0 & 0 &1\\
                       \end{bmatrix}
                       \begin{bmatrix}                      
                       \cos\psi_i & -\sin\psi_i & 0\\
                       \sin\psi_i & \cos\psi_i & 0\\
                       0 & 0 &1\\
                       \end{bmatrix}
                       \begin{bmatrix}   
                       \epsilon_i\\
                       v_i\\
                       \dot{\omega}_i\\
                       \end{bmatrix}.   
\end{align}
Let $\epsilon_i^r, v_i^r$ be the high-level guidance velocities for $\epsilon_i, v_i$, respectively. Defining the signal errors as 
\begin{figure}[!htb]
\centering
\includegraphics[width=\hsize]{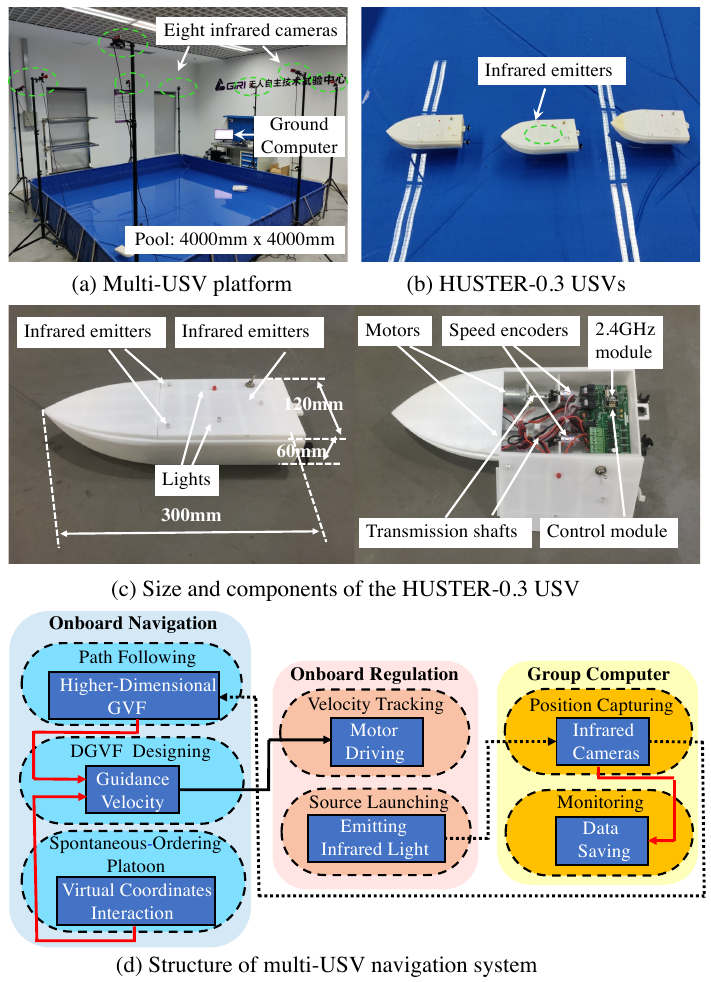}
\caption{ (a) The multi-USV platform consists of eight infrared cameras, a computer, and a $4000$ mm $\times$ $4000$ mm pool. (b) Three HUSTER-0.3 USVs with infrared emitters as the identifiers on their tops which can be identified by eight infrared cameras during the experiments. (c) Size: $300$mm (length) $\times$ $120$mm (width) $\times$ $60$mm (height) and detailed components of the HUSTER-0.3 USV. (d) Structure of the multi-USV navigation system, where the path following, {\it spontaneous-ordering} platoon, DGVF designing, velocity tracking, and source launching are running onboard, and the position capturing and monitoring are conducted on the ground computer. (The solid lines denote physical connections whereas the dotted lines virtual connections.)  
  }
\label{MUSV_platform}
\end{figure}
\begin{align}
\label{velocity_error}
\widetilde{\epsilon}_i:=\epsilon_i-\epsilon_i^r, \widetilde{v}_i:=v_i-v_i^r,
\end{align}  
it follows from Eqs.~\eqref{dynamic_path_1} and \eqref{velocity_error} that
\begin{align}
\label{dynamic_path_2}
\begin{bmatrix}
   \dot{\phi}_{i,1}\\
   \dot{\phi}_{i,2}\\
   \dot{\omega}_i\\
\end{bmatrix} =& \begin{bmatrix}                      
                       \cos\psi_i & -\sin\psi_i & -\partial f_{i,1}(\omega_i)\\
                       \sin\psi_i & \cos\psi_i & -\partial f_{i,2}(\omega_i)\\
                       0 & 0 &1\\
                       \end{bmatrix}
                       \begin{bmatrix}   
                       \epsilon_i^r\\
                       v_i^r\\
                       u_i^{\omega}\\
                       \end{bmatrix} 
                       +
                        \begin{bmatrix}   
                         e_{\epsilon,i}\\
                         e_{v,i}\\
                       0\\
                       \end{bmatrix}     
\end{align}
with
\begin{align}
\label{disturbance_e}
e_{\epsilon,i}:=&\widetilde{\epsilon}_i\cos\psi_i-\widetilde{v}_i\sin\psi_i,\nonumber\\
e_{v,i}:=&\widetilde{\epsilon}_i\sin\psi_i+\widetilde{v}_i\cos\psi_i.
\end{align}

Starting from DGVF~\eqref{desired_law} for the single-integrator robots in~\eqref{kinetic_F}, a modified DGVF algorithm for USV $i$ is naturally proposed as follows,
\begin{align}
\label{desired_USV_law}
\epsilon_i^r=&(\partial f_{i,1}-k_{i,1}\phi_{i,1})\cos\psi_i+(\partial f_{i,2}-k_{i,2}\phi_{i,2})\sin\psi_i,\nonumber\\
v_i^r=&-(\partial f_{i,1}-k_{i,1}\phi_{i,1})\sin\psi_i+(\partial f_{i,2}-k_{i,2}\phi_{i,2})\cos\psi_i,\nonumber\\
u_i^{\omega}=&1+\sum\limits_{j=1}^2 k_{i,1}\phi_{i,1}\partial f_{i,1}-c_i(\omega_i-\widehat{\omega}_i)+\eta_i.
\end{align}
Note that the low-level velocity tracking problem in \eqref{kinematic_F} and~\eqref{velocity_error}, i.e., $\lim_{t\rightarrow\infty}\widetilde{\epsilon}_i(t)=0, \lim_{t\rightarrow\infty}\widetilde{v}_i(t)=0$ exponentially has been well addressed in \cite{hu2021bearing}, which then follows from~\eqref{disturbance_e} and the bounded trigonometric function $\cos\psi_i, \sin\psi_i$ that $\lim_{t\rightarrow\infty}e_{\epsilon,i}(t)=0, \lim_{t\rightarrow\infty}e_{v,i}(t)=0$ exponentially.

\begin{proposition}
Under the conditions C1-C4, a multi-USV system composed of \eqref{kinetic_USV},~\eqref{kinematic_F} and the modified DGVF algorithm~\eqref{desired_USV_law}
achieves the property P1 subject to $\lim_{t\rightarrow\infty}\widetilde{\epsilon}_i(t)=0, 
\lim_{t\rightarrow\infty}\widetilde{v}_i(t)=0$, exponentially. 
\end{proposition}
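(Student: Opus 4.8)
The plan is to show that, in closed loop, the modified DGVF algorithm~\eqref{desired_USV_law} collapses the USV error dynamics~\eqref{dynamic_path_2} onto \emph{exactly} the single-integrator closed-loop system~\eqref{dynamic_path} driven by the DGVF~\eqref{desired_law} with $n=2$, perturbed only by the velocity-tracking residuals $e_{\epsilon,i},e_{v,i}$ of~\eqref{disturbance_e}. Since these residuals vanish exponentially, they play precisely the role of the exponentially decaying estimation/disturbance errors that conditions C3 and C5 inject into Lemmas~\ref{lemma_finiteescape}--\ref{lemma_oderingflexible} (indeed, the USV model~\eqref{kinetic_USV}--\eqref{kinematic_F} carries no separate external disturbance, so C5 is vacuous here and its role is taken over by the assumed exponential velocity tracking). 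Hence the three lemmas apply with only cosmetic changes and P1 follows exactly as in Theorem~\ref{theo_platoon}.

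First I would substitute $\epsilon_i^r,v_i^r$ from~\eqref{desired_USV_law} into the first two rows of~\eqref{dynamic_path_2}. By construction the pair $(\epsilon_i^r,v_i^r)$ is the transpose of the orthogonal rotation block appearing in~\eqref{dynamic_path_2} applied to $(\partial f_{i,1}-k_{i,1}\phi_{i,1},\,\partial f_{i,2}-k_{i,2}\phi_{i,2})$, so the trigonometric factors cancel via $\cos^2\psi_i+\sin^2\psi_i=1$ and the closed loop becomes
\begin{align*}
\dot\phi_{i,1} &= (\partial f_{i,1}-k_{i,1}\phi_{i,1})-\partial f_{i,1}\,u_i^\omega + e_{\epsilon,i},\\
\dot\phi_{i,2} &= (\partial f_{i,2}-k_{i,2}\phi_{i,2})-\partial f_{i,2}\,u_i^\omega + e_{v,i},\\
\dot\omega_i &= u_i^\omega = 1+\textstyle\sum_{j=1}^{2}k_{i,j}\phi_{i,j}\partial f_{i,j}-c_i(\omega_i-\widehat\omega_i)+\eta_i,
\end{align*}
which is~\eqref{dynamic_path} with $n=2$, $d_i\equiv 0$, the DGVF inputs~\eqref{desired_law}, and the residual $[e_{\epsilon,i},e_{v,i}]\t$ sitting where the disturbance-estimation error $\widetilde d_i$ sat.

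Second, I would check that $[e_{\epsilon,i},e_{v,i}]\t$ meets every requirement the proofs of Lemmas~\ref{lemma_finiteescape} and~\ref{lemma_pathconvergence} impose on that error term. From $\lim_{t\to\infty}\widetilde\epsilon_i(t)=\lim_{t\to\infty}\widetilde v_i(t)=0$ exponentially --- established for~\eqref{kinematic_F},~\eqref{velocity_error} by the cited low-level tracking controller, with a decay rate independent of the upper-loop states --- together with the boundedness of $\cos\psi_i,\sin\psi_i$, formula~\eqref{disturbance_e} gives $e_{\epsilon,i},e_{v,i}\to 0$ exponentially; in particular they are bounded on $[0,\infty)$, and their derivatives are bounded once $\dot{\widetilde\epsilon}_i,\dot{\widetilde v}_i,\dot\psi_i=r_i$ are bounded, which the same low-level design guarantees. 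Consequently: (i) Lemma~\ref{lemma_finiteescape}'s finite-escape argument uses only boundedness of the error over finite intervals, which holds, so $\omega_{i,k}(t)\neq 0,r$ for all $t>0$ and Claim~4) is retained; (ii) in Lemma~\ref{lemma_pathconvergence} the Lyapunov/Barbalat chain absorbs $e_{\epsilon,i},e_{v,i}$ through the same Young-type splitting $e_i^2/4+\|\widetilde d_i\|^2/2$, whose integral is finite by exponential decay, so $\phi_{i,j}\to 0$ and $\dot\omega_i\to(-1)^2=1\neq 0$ for all $i$, yielding Claims~1)--2); (iii) Lemma~\ref{lemma_oderingflexible} invokes only these asymptotic facts together with $|\omega_{i,k}|>r$ and condition C4, so Claim~3) survives. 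Stacking the three conclusions as in Theorem~\ref{theo_platoon} gives P1.

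The step I expect to be the main obstacle is making the cascade argument airtight: the cited exponential velocity tracking is proved for reference velocities that are, here, themselves functions of the path errors $\phi_{i,j}$ we are trying to bound, so a priori a finite-time escape could propagate between the two loops. The resolution is to note that the decay of $\widetilde\epsilon_i,\widetilde v_i$ does not depend on the magnitude of the upper-loop states, hence $e_{\epsilon,i},e_{v,i}$ are bounded and exponentially vanishing on the \emph{entire} maximal interval of existence; Lemma~\ref{lemma_finiteescape}'s conclusion then closes the loop by forbidding escape of $\omega_{i,k}$, extending that interval to $[0,\infty)$. Articulating this interleaving carefully --- rather than any fresh computation --- is the delicate point; everything else is a direct appeal to the already-proven lemmas specialized to $n=2$.
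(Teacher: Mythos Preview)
Your proposal is correct and follows precisely the route the paper has in mind: the paper's own proof is simply ``The proof is similar to Theorem~\ref{theo_platoon}, which is thus omitted,'' and what you have written is exactly that similarity spelled out --- the rotation in~\eqref{desired_USV_law} cancels the rotation block in~\eqref{dynamic_path_2}, yielding the closed loop~\eqref{dynamic_path_4} with $n=2$ and the exponentially vanishing residual $[e_{\epsilon,i},e_{v,i}]\t$ in place of $\widetilde d_i$, after which Lemmas~\ref{lemma_finiteescape}--\ref{lemma_oderingflexible} apply verbatim. Your extra care about the cascade/finite-escape interleaving goes slightly beyond what the paper articulates, but it is consistent with (and a reasonable elaboration of) the omitted argument.
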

\begin{proof}
The proof is similar to Theorem~\ref{theo_platoon}, which is thus omitted.
\end{proof}

\subsection{2D Experiments with USVs}
For the experiments, we firstly establish an indoor multi-USV navigation platform and thereby conduct the {\it spontaneous-ordering} platoon experiments. As shown in Fig.~\ref{MUSV_platform}~(a), the multi-USV navigation platform is composed of a $4000$mm $\times$ $4000$mm pool, a motion-capture system (eight Flex 3 infrared cameras) to identify the positions of the USVs, and a ground computer (Intel core i7-960) to transmit, analyze and store the detection data to the three HUSTER-0.3 USVs. As demonstrated in Fig.~\ref{MUSV_platform}~(b), three HUSTER-0.3 USVs are all equipped with infrared emitters, which are utilized for identification by infrared cameras. 
Moreover, it is observed in Fig.~\ref{MUSV_platform} (c) that each HUSTER-0.3 USV is $300$mm in length, $120$mm in width, and $60$mm in height, which is equipped with two DC motors ($5$V), two speed encoders (Mini-256), two transmission shafts ($150$mm $\times$ $6$mm), a control module (STM32F1) and a $2.4$GHz wireless module (NRF24L01). Please refer to our previous work \cite{hu2021bearing} for more details. Fig.~\ref{MUSV_platform}~(d) exhibits the structure of the multi-USV navigation system, which is divided into three parts: the onboard navigation to produce desired guidance velocity based on DGVF, the onboard regulation to track velocity and launch infrared lights, and the ground computer to capture positions and save data.  
During the navigation experiments, our DGVF algorithm is running with a fixed $10$Hz frequency and all the data are transmitted to and saved on the ground computer.

In what follows, we consider the 2D circular and {\it self-intersecting} Lissajous waterway  (i.e., desired paths) to conduct {\it spontaneous-ordering} platoon experiments using the modified DGVF~\eqref{desired_USV_law}. 
First, we choose the sensing 
and safe radius $R=1.0, r=0.7$, where the potential function $\alpha(s)$
can be designed based on Eq.~\eqref{potential_alpha}.
The target virtual coordinate $\omega^{\ast}$ satisfies
$\dot{\omega}^{\ast}=1$ in Eq.~\eqref{target_velocity}, where the initial value $\omega^{\ast}(0)$ is set to be $\omega^{\ast}(0)=0$. By 
Remark~\ref{remark_c3}, we pick the estimator gains $\gamma_1=20, \gamma_2=4$ 
to satisfy condition C3 with a connected communication topology.

\begin{figure}[!htb]
\centering
\includegraphics[width=\hsize]{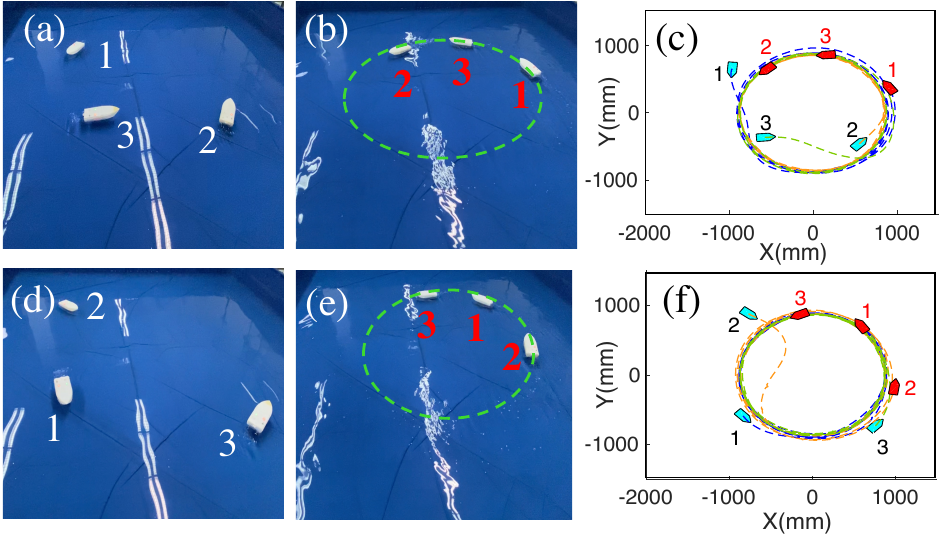}
\caption{ Two experimental cases of the {\it spontaneous-ordering} platoon moving along the desired 2D circular waterway  using the modified DGVF~\eqref{desired_USV_law}. Subfigures (a), (d): Initial positions of the USVs. Subfigures (b), (e): Final platoons move along the circular waterway  after 40 seconds. Subfigures (c), (f): Trajectories of the three USVs from the initial positions to the final platoon with distinct ordering sequences (Here, the blue vessels represent the initial positions, and the red ones the final platoon).}
\label{experi_snap_circle}
\end{figure}
\begin{figure}[!htb]
\centering
\includegraphics[width=8.25cm]{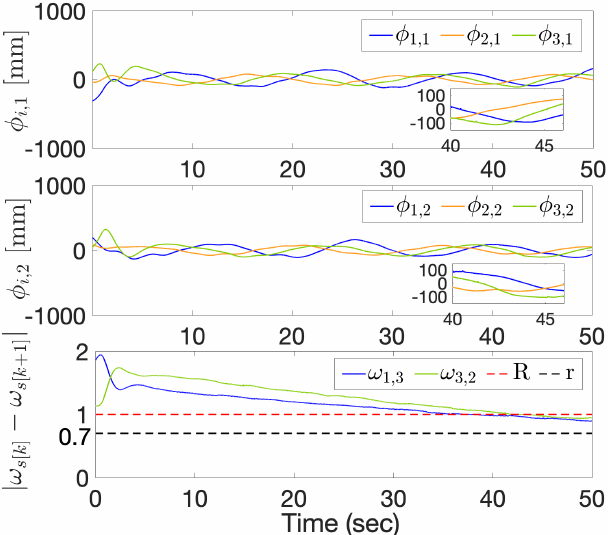}
\caption{ Temporal evolution of the position errors 
$\phi_{i,1}, \phi_{i,2}, i=1,2,3,$ and the final relative value of virtual coordinates between each pair of adjacent robots $|\omega_{s[k]}(t)-\omega_{s[k+1]}(t)|, k=1, 2,$
in Fig.~\ref{experi_snap_circle} (c).}
\label{experi_circle_performance}
\end{figure}

\begin{figure}[!htb]
\centering
\includegraphics[width=\hsize]{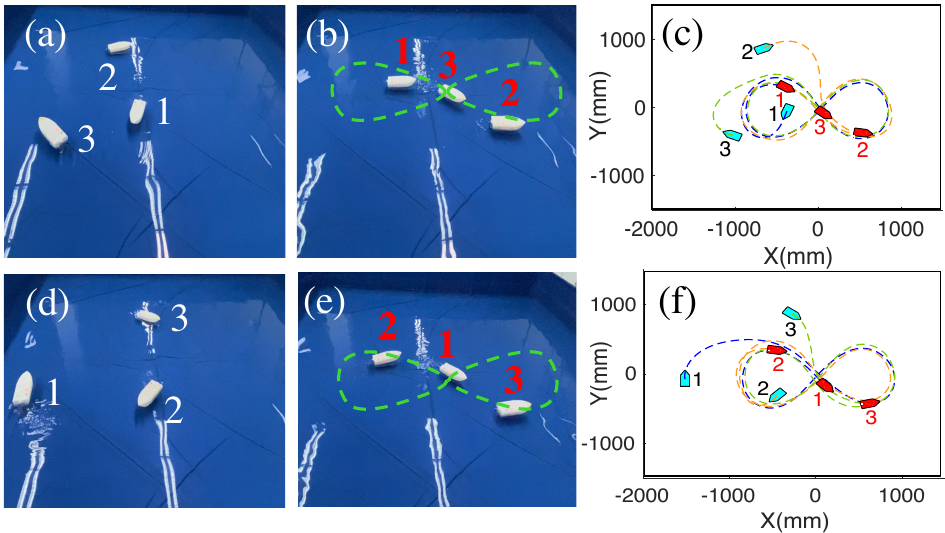}
\caption{ Two experimental cases of the {\it spontaneous-ordering} platoon moving along the desired  2D {\it self-intersecting} waterway  using the proposed modified DGVF algorithm~\eqref{desired_USV_law}. Subfigures (a), (d): Initial positions of the USVs. Subfigures (b), (e): Final platoon moves along the {\it self-intersecting} waterway after 16s. Subfigures (c), (f): Trajectories of the three USVs from the initial positions to the final platoon with distinct ordering sequences (Here, the blue vessels represent the initial positions, and the red ones the final platoon).}
\label{experi_snap_eight}
\end{figure}
\begin{figure}[!htb]
\centering
\includegraphics[width=8.4cm]{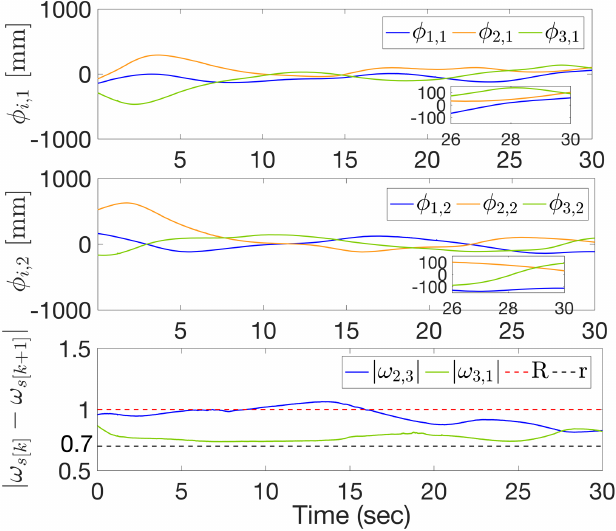}
\caption{ Temporal evolution of the position errors 
$\phi_{i,1}, \phi_{i,2}, i=1,2,3,$ and the final relative value of virtual coordinates between each pair of adjacent robots $|\omega_{s[k]}(t)-\omega_{s[k+1]}(t)|, k=1, 2,$
in Fig.~\ref{experi_snap_eight} (c).}
\label{experi_eight_performance}
\end{figure}

For the desired 2D circular paths $\mathcal P_i^{phy}, i\in{\cal V}$, the parametrization is 
$$x_{i,1}=800\cos\omega_i\mbox{mm},~x_{i,2}=800\sin\omega_i \mbox{mm},$$
which fulfills conditions C2 and C4. We choose the gains $k_{i,1}=3.5, k_{i,2}=3.5, c_i=2$  in \eqref{desired_USV_law}. Fig.~\ref{experi_snap_circle} illustrates two experimental cases of the {\it spontaneous-ordering} platoon moving along the common desired 2D circular waterway. As shown in Figs.~\ref{experi_snap_circle} (c) and (f), three USVs from different initial positions (blue vessels) achieve platoons (red vessels) with distinct ordering sequences (Fig.~\ref{experi_snap_circle} (c): \{2,3,1\} and Fig.~\ref{experi_snap_circle} (f): \{3,1,2\}), where the corresponding experimental snapshots of the initial positions and the final platoons are given in Figs.~\ref{experi_snap_circle} (a), (b), (d), (e), respectively. It thus verifies that the ordering of the platoon is spontaneous. Additionally, we take Fig.~\ref{experi_snap_circle} (c) as an example to analyze the state evolution in the circular-path experiments. 
 It is observed in the zoomed-in panels $[40\mbox{s},45\mbox{s}]\times [-100\mbox{mm}, 100\mbox{mm}]$ of Fig.~\ref{experi_circle_performance}  that $\phi_{i,1}, \phi_{i,2}, i=1,2,3,$ approach and stay in the range of $[-100\mbox{mm}, 100\mbox{mm}]$ after $40$ seconds, which is acceptable compared with the length of the desired circular path and the size of the USV in the trajectory of Fig.~\ref{experi_snap_circle} (c). In this way, the effectiveness of tracking the desired circular waterway is verified.
Moreover, the relative value of adjacent virtual coordinates $|\omega_{1,3}|, |\omega_{3,2}|$ satisfy $|\omega_{1,3}|\in (0.7, 1.0), |\omega_{3,2}|\in (0.7, 1.0)$ after $40$ seconds in 
Fig.~\ref{experi_circle_performance}, which verifies that the platoon in terms of relative parametric displacement is achieved. The feasibility of the proposed algorithm~\eqref{desired_USV_law} for closed waterways  is thus demonstrated.

\begin{figure}[!htb]
\centering
\includegraphics[width=\hsize]{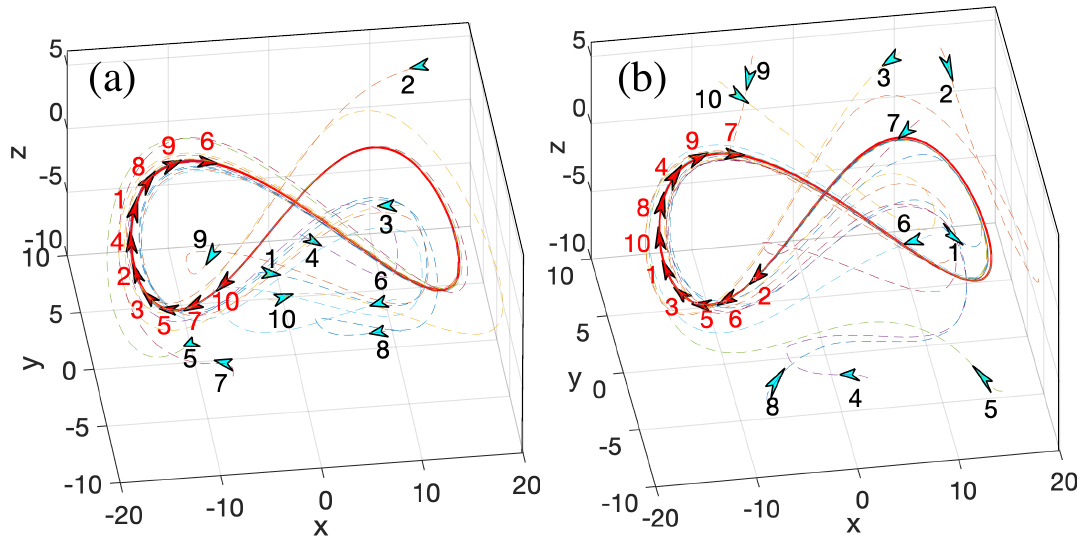}
\caption{
Cases (a)-(b): trajectories of ten robots from different initial positions to {\it spontaneous-ordering} platoons moving along a desired Lissajous path in the 3D Euclidean space with the proposed DGVF~\eqref{desired_law}. (Here, the blue and red arrows represent the initial and final positions of the robots, respectively. The red line denotes the desired Lissajous path).}
\label{Lissajous_12}
\end{figure}
\begin{figure}[!htb]
\centering
\includegraphics[width=8cm]{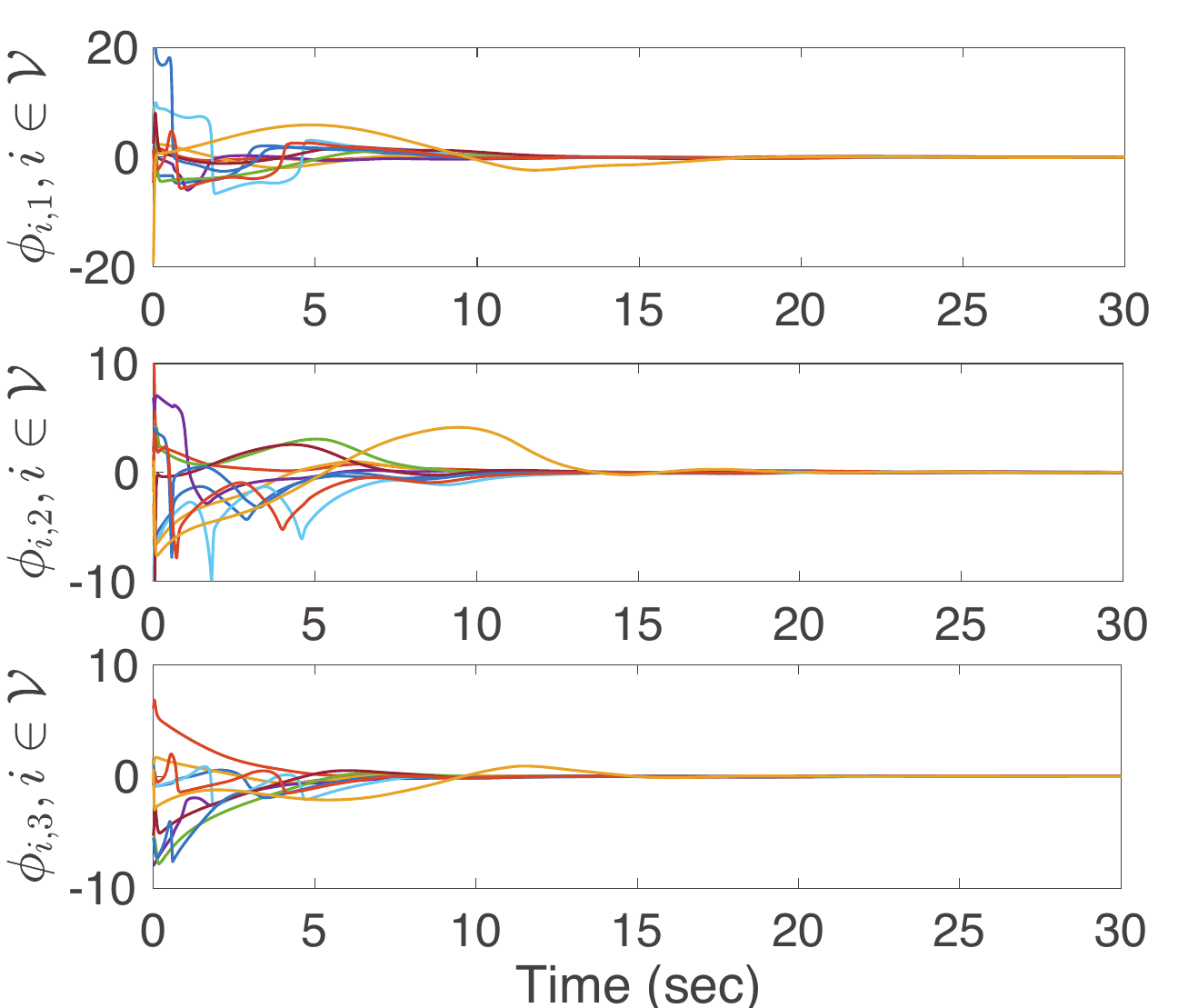}
\caption{Temporal evolution of the position errors 
$\phi_{i,1}, \phi_{i,2}, \phi_{i,3}, \forall i\in\mathbb{Z}_1^{10},$
in Fig.~\ref{Lissajous_12} (a) for example.}
\label{self_intersected_performance1}
\end{figure}
\begin{figure}[!htb]
\centering
\includegraphics[width=8cm]{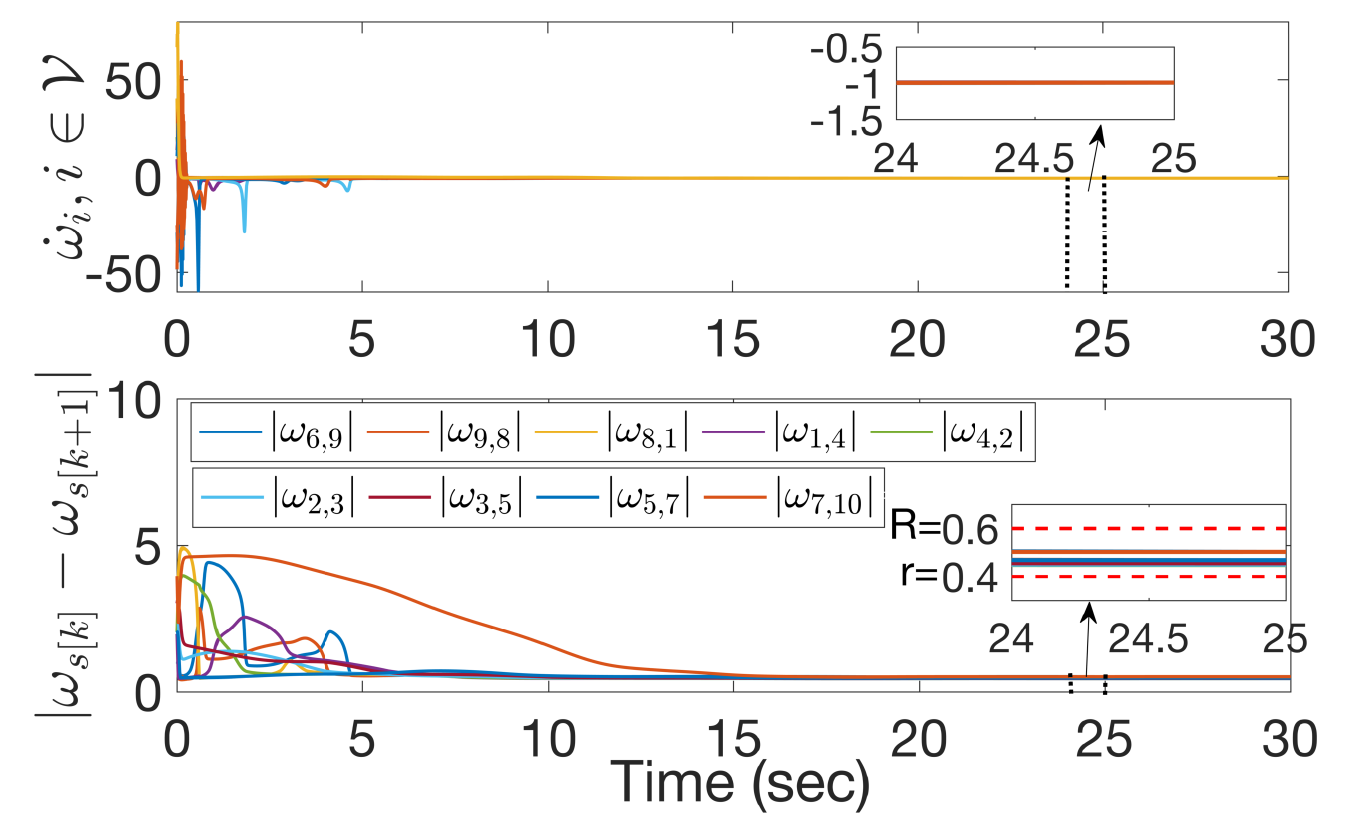}
\caption{ Temporal evolution of the derivative of the virtual coordinate $\dot{\omega}_i, \forall i\in\mathbb{Z}_1^{10}$ 
and the final relative value of virtual coordinates between adjacent robots $|\omega_{s[k]}(t)-\omega_{s[k+1]}(t)|, \forall k\in\mathbb{Z}_1^{9},$
in Fig.~\ref{Lissajous_12} (a) for example.}
\label{self_intersected_path2}
\end{figure}

\begin{figure}[!htb]
\centering
\includegraphics[width=\hsize]{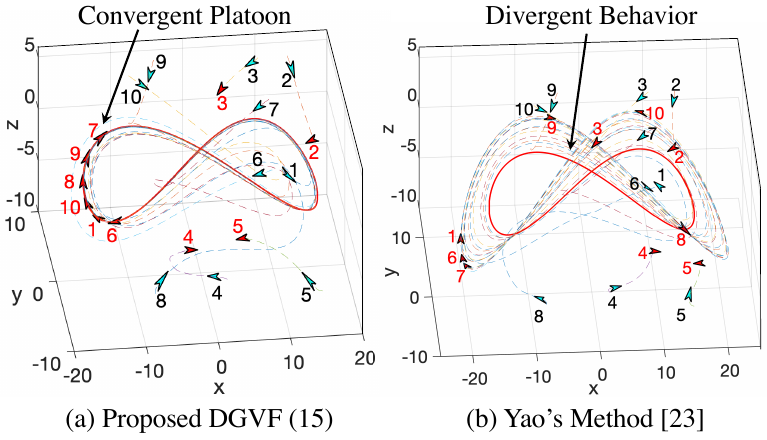}
\caption{ A special situation of four robots $i=2,3,4,5,$ suddenly breaking down at $t = 2$s and the rest of six robots stoping interacting with the broken four robots when $t>2$s in the ten-robot path navigation mission. Trajectory and platoon comparison of the rest of the six robots $i=1,6,7,8,9,10,$
between the proposed DGVF \eqref{desired_law} (see the successful platoon in subfigure (a)) and Yao's fixed-ordering method~\cite{yao2021distributed} (see the failure of a platoon in subfigure (b)). (Here, the blue and red arrows represent the initial and final positions of the robots, respectively. The red line denotes the desired Lissajous path).
}
\label{broken_Lissajous_12}
\end{figure}

For the desired 2D Lissajous path $\mathcal P_i^{phy}, i\in{\cal V}$ containing {\it self-intersecting} points, it follows from the conditions C2 and C4 that the parametrization is
\begin{align*}
x_{i,1}=\frac{800\cos\omega_i}{1+0.3(\sin\omega_i)^2} \mbox{mm}, x_{i,2}=\frac{800\sin\omega_i\cos\omega_i}{1+0.3(\sin\omega_i)^2}\mbox{mm}
\end{align*}
with the virtual coordinates $\omega_i$. The gains in \eqref{desired_USV_law} are set to be $k_{i,1}=2, k_{i,2}=2, c_i=2$. Analogously, Fig.~\ref{experi_snap_eight} illustrates two experimental cases of the {\it spontaneous-ordering} platoon whereas moving along the common desired 2D {\it self-intersecting} waterway. It is observed in Figs.~\ref{experi_snap_eight} (c) and (f) that three USVs from different initial positions (blue vessels) also achieve platoons (red vessels) with distinct ordering sequences (Fig.~\ref{experi_snap_eight} (c): \{2,3,1\} and Fig.~\ref{experi_snap_eight} (f): \{3,1,2\}), where the corresponding experimental snapshots of initial positions and final platoons are given in Figs.~\ref{experi_snap_eight} (a), (b), (d), (e), respectively. We take Fig.~\ref{experi_snap_eight} (c) as an example to analyze the state evolution in the {\it self-intersecting}-waterway experiments.

As shown in the zoomed-in panels $[26\mbox{s},30\mbox{s}]\times [-100\mbox{mm}, 100\mbox{mm}]$ of Fig.~\ref{experi_eight_performance}, $\phi_{i,1}, \phi_{i,2}, i=1,2,3,$ approach and stay in the range of $[-100\mbox{mm}, 100\mbox{mm}]$ after $26$ seconds, which is acceptable as well compared with the length of the desired Lissajous path and the size of the USV in the trajectory of Fig.~\ref{experi_snap_eight} (c). It thus verifies the effectiveness of tracking the desired Lissajous waterway. Moreover, the relative value of adjacent virtual coordinates $|\omega_{1,3}|, |\omega_{3,2}|$ also satisfy $|\omega_{1,3}|\in (0.7, 1.0), |\omega_{3,2}|\in (0.7, 1.0)$ after $16$ seconds in Fig.~\ref{experi_eight_performance}, which verifies that the platoon in terms of relative parametric displacement is also achieved. The feasibility of the proposed DGVF algorithm~\eqref{desired_USV_law} for {\it self-intersecting} waterway  is thus substantiated. More experimental details can be viewed in the attached video. \footnote{Online. Available: \href{https://www.youtube.com/watch?v=QCkpw6Pwpoo}{https://www.youtube.com/watch?v=QCkpw6Pwpoo} }

\begin{figure}[!htb]
\centering
\includegraphics[width=\hsize]{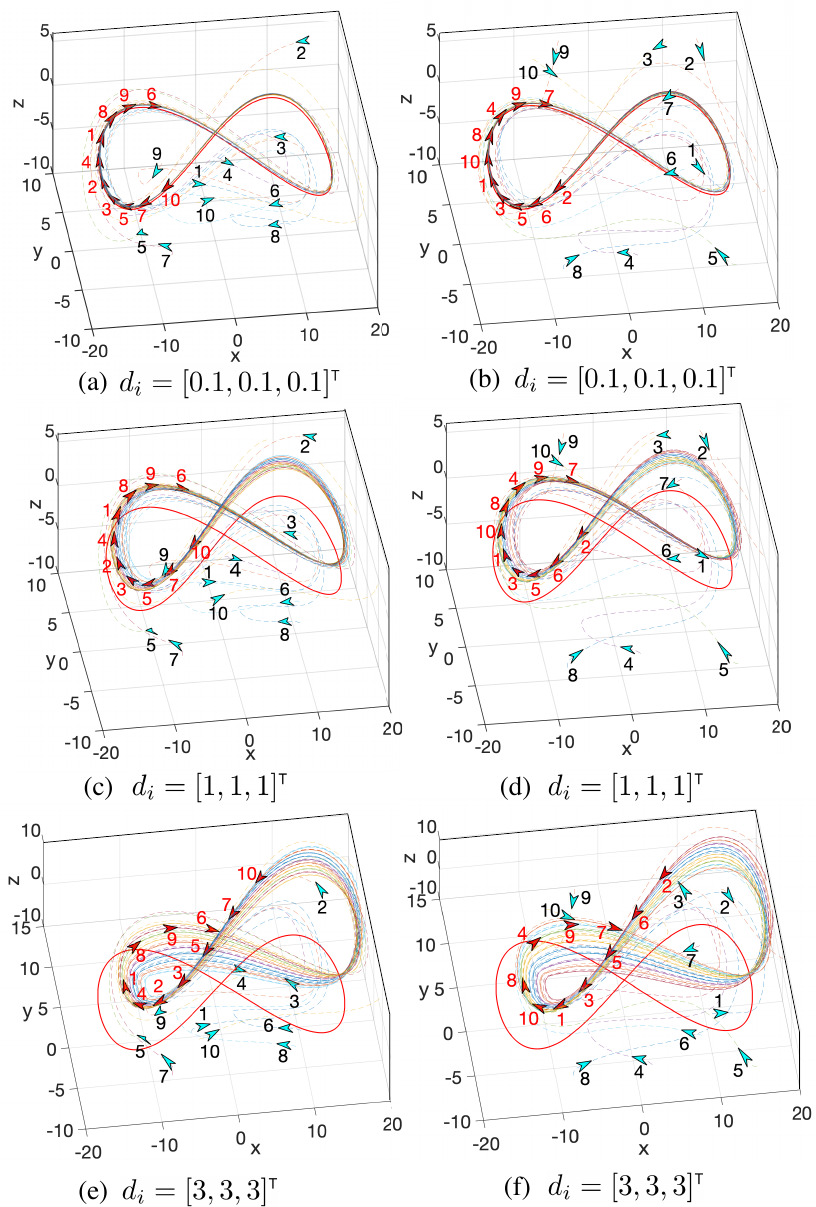}
\caption{Trajectories and platoon performance of ten robots governed by the proposed DGVF~\eqref{desired_law} under external constant disturbances with
increasing intensities. Subfigures (a) and (b): $d_i=[0.1,0.1,0.1]\t, i\in\mathbb{Z}_1^{10}$, subfigures (c) and (d): $d_i=[1, 1, 1]\t, i\in\mathbb{Z}_1^{10}$, and subfigures (e) and (f): $d_i=[3, 3, 3]\t, i\in\mathbb{Z}_1^{10}$.
(Here, the blue and red arrows represent the initial and final positions of the robots, respectively. The red line denotes the desired Lissajous path).
}
\label{distur_1_Lissajous_12}
\end{figure}

\subsection{3D Numerical Simulations}
\label{simulation_23D}
In this part, 3D numerical simulations are conducted to validate the feasibility of Theorem~\ref{theo_platoon} in the higher-dimensional Euclidean space.
We consider $n=10$ robots governed by~\eqref{kinetic_F} and \eqref{desired_law}, where the sensing 
and safe radius are given by $R=0.6, r=0.4$, respectively. The potential function $\alpha(s)$ is designed according to Eq.~\eqref{potential_alpha}. 
Moreover, the parametric setting for the target virtual coordinate $\omega^{\ast}$ and the estimator gains $\gamma_1, \gamma_2$ in Remark~\ref{remark_c3} are the same as those in the experimental subsection.

\begin{figure}[!htb]
\centering
\includegraphics[width=8cm]{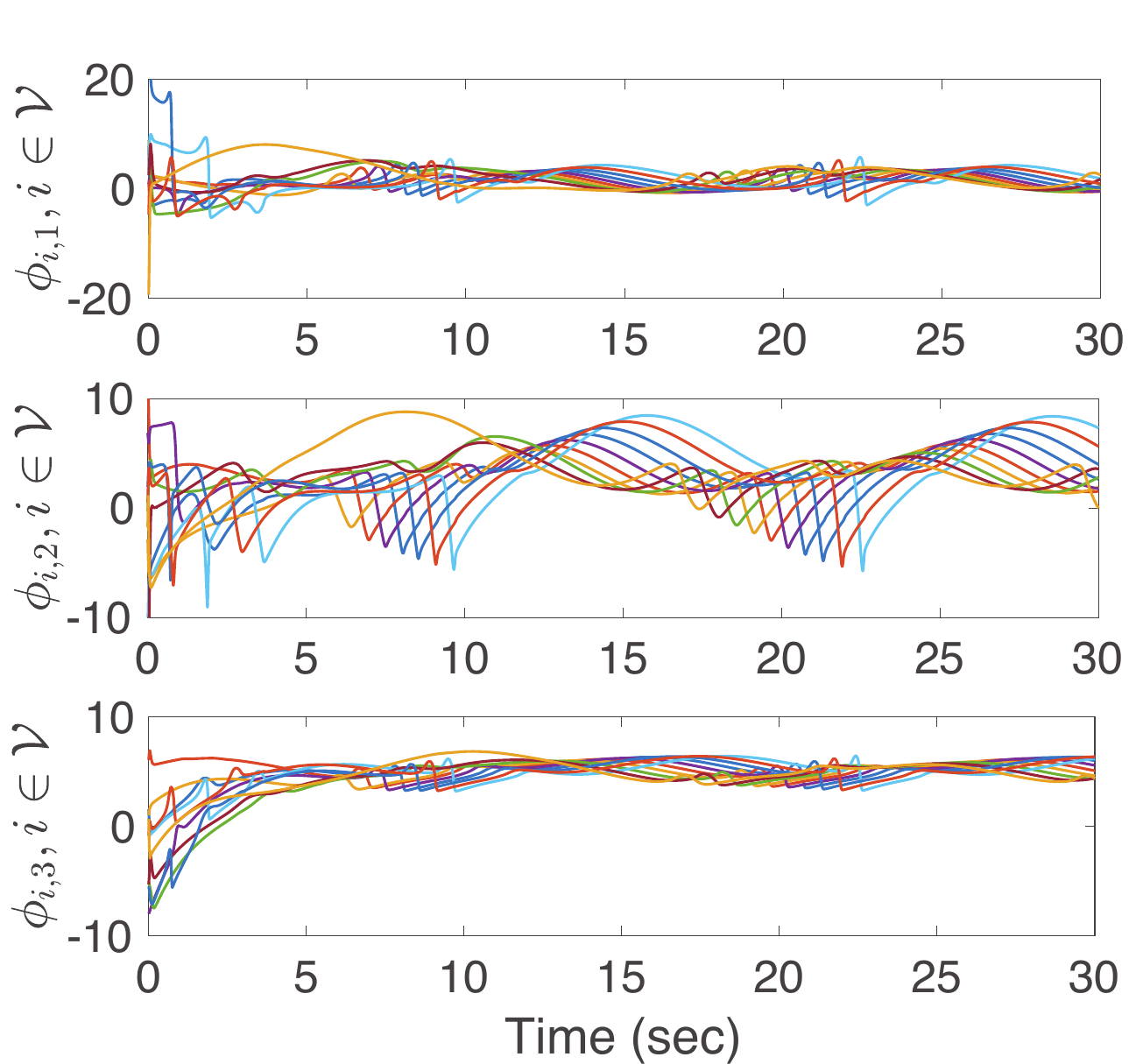}
\caption{Temporal evolution of the position errors 
$\phi_{i,1}, \phi_{i,2}, \phi_{i,3}, \forall i\in\mathbb{Z}_1^{10},$ under the constant disturbances $d_i=[3,3,3]\t, i\in\mathcal V,$
in Fig.~\ref{distur_1_Lissajous_12} (e) for example.}
\label{Dis_Lissajous_performance1}
\end{figure}

\begin{figure}[!htb]
\centering
\includegraphics[width=8cm]{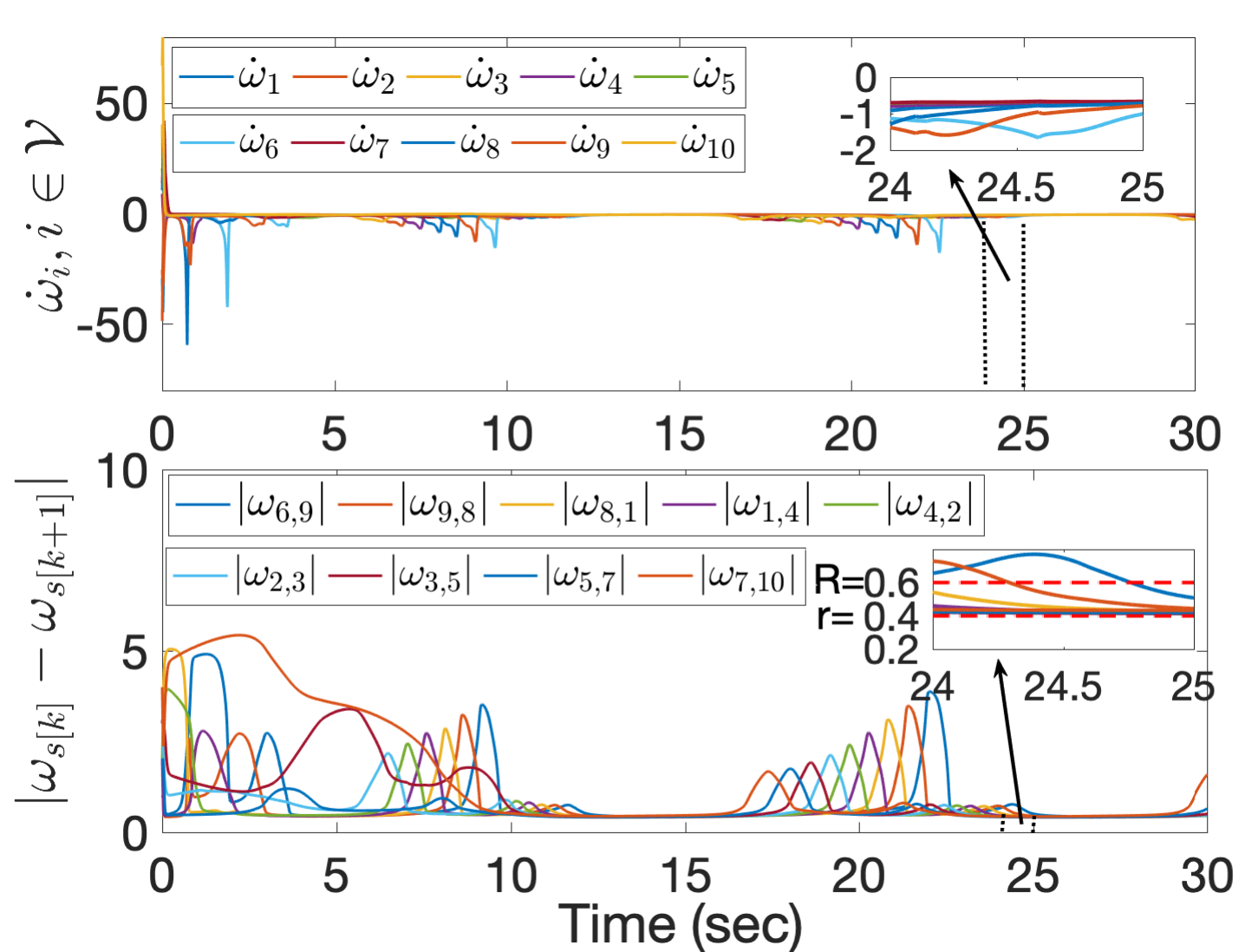}
\caption{ Temporal evolution of the derivative of the virtual coordinate $\dot{\omega}_i, \forall i\in\mathbb{Z}_1^{10}$ 
and the final relative value of virtual coordinates between adjacent robots $|\omega_{s[k]}(t)-\omega_{s[k+1]}(t)|, \forall k\in\mathbb{Z}_1^{9},$ under the constant disturbances $d_i=[3,3,3]\t, i\in\mathcal V,$
in Fig.~\ref{distur_1_Lissajous_12} (e) for example.}
\label{Dis_Lissajous_performance2}
\end{figure}

In what follows, a desired 3D Lissajous path 
$\mathcal P_i^{phy}, i\in{\cal V}$ containing {\it self-intersecting} points is considered, 
of which the parametrization is 
$x_{i,1}=16\cos(0.5\omega_i), x_{i,2}=6\cos(\omega_i+\frac{\pi}{2}), x_{i,3}=2\cos\omega_i, i\in\mathcal V$
fulfilling conditions C2 and C4.
The parameters in \eqref{desired_law} are set to be 
$k_{i,1}=0.6, k_{i,2}=0.6, k_{i,3}=0.6, c_i=3, i\in{\cal V}$.
Figs.~\ref{Lissajous_12} (a)-(b) describes the trajectories of 
ten robots from different initial positions (blue arrows) fulfilling the condition C1
to the {\it spontaneous-ordering} platoon maneuvering along the common 3D Lissajous path (red arrows) in the 3D Euclidean space.

During the process, the multi-robot platoon   
with different initial positions is achieved with distinct ordering sequences (Fig.~\ref{Lissajous_12} (a): \{6,9,8,1,4,2,3,5,7,10\} and Fig.~\ref{Lissajous_12} (b): \{7,9,4,8,10,1,3,5,6,2\}), which demonstrates the property of spontaneous orderings as well. 
Additionally, we take Fig.~\ref{Lissajous_12} (b) as an illustrative example to analyze the states' evolution of {\it spontaneous-ordering} platoon in the {\it self-intersecting-path} navigation task. As shown in Fig.~\ref{self_intersected_performance1}, the path-following errors $\phi_{i,1},\phi_{i,2}, \phi_{i,3}, i\in {\cal V}, $ converge to zeros after 17 seconds, i.e., $\lim_{t\rightarrow\infty}\phi_{i,1}(t)=0, \lim_{t\rightarrow\infty}\phi_{i,2}(t)=0, \lim_{t\rightarrow\infty}\phi_{i,3}(t)=0, i\in{\cal V}$, which verifies Claim 1) in Definition~\ref{CPF_definition}. 
Essentially, it is observed in Fig.~\ref{self_intersected_path2} that $\lim_{t\rightarrow\infty}\dot{\omega}_i(t)=1, i\in{\cal V}$, which verifies $\lim_{t\rightarrow\infty} \dot{\omega}_{i}(t)=\dot{\omega}_{k}(t)\neq 0, \forall i\neq k \in{\cal V}$ in Claim 2) of Definition~\ref{CPF_definition} explicitly. 
The final relative values of adjacent virtual coordinates satisfy $0.4<\lim_{t\rightarrow\infty} |\omega_{s[k]}(t)-\omega_{s[k+1]}(t)|<0.6, \forall k\in\mathbb{Z}_1^{10}$ in the zoomed-in panels $[24\mbox{s}, 25\mbox{s}]\times [0.3, 0.7]$ of Fig.~\ref{self_intersected_path2}, then it fulfills Claims 3) and 4) of Definition 2. It thus has the property~P1.

To show the robustness of the proposed DGVF algorithm, we compare our algorithm with Yao's fixed-ordering method \cite{yao2021distributed} when some robots break down during the 
multi-robot path navigation process. According to the fixed-ordering method in \cite{yao2021distributed}, we set the desired values between adjacent virtual coordinates to be $|\omega_{k, k+1}^*|=2\pi/15, k\in\mathbb{Z}_1^9$ with a connected and fixed communication topology in advance, which can form a platoon with a fixed ordering sequence: $\{1,2,3,4,5,6,7,8,9,10\}$.
Then, we consider a special situation when four robots $i=2,3,4,5,$ suddenly break down at $t=2$s and conduct the numerical simulations to compare the robustness between the proposed DGVF \eqref{desired_law} and the fixed-ordering algorithm  \cite{yao2021distributed}. As shown in Fig.~\ref{broken_Lissajous_12} (a), the rest of six robots $i=1,6,7,8,9,10$ governed by DGVF \eqref{desired_law} stop communicating with the broken robots $i=2,3,4,5$ and only interacting with the rest ones, which still forms a satisfactory six-robot platoon fulfilling the property P1. However, it is observed in Fig.~\ref{broken_Lissajous_12} (b) that the rest of six robots governed by the method in \cite{yao2021distributed} fail to form the platoon anymore. The robustness of the DGVF design \eqref{desired_law} is
thus verified when experiencing robots breakdown.

Moreover, to analyze the quantitative influence of external disturbances~$d_i$ in \eqref{kinetic_F} on the DGVF algorithm \eqref{desired_law}, we consider the external constant disturbances with increasing intensities in the {\it spontaneous-ordering} platoon task. 
For $d_i=[0.1, 0.1, 0.1]\t, i\in\mathbb{Z}_1^{10}$, Fig.~\ref{distur_1_Lissajous_12} (a)-(b) illustrates that ten robots from different initial positions (blue arrows) can still form the {\it spontaneous-ordering} platoon maneuvering along the common desired 3D Lissajous path (red arrows).  For $d_i=[1, 1, 1]\t, i\in\mathbb{Z}_1^{10}$, it is observed in Fig.~\ref{distur_1_Lissajous_12}~(c)-(d) that ten robots from different initial positions maintain a platoon-like formation, but only deviate the desired 3D Lissajous path by a certain distance. However, for $d_i=[3,3,3]\t, i\in\mathbb{Z}_1^{10}$, Fig.~\ref{distur_1_Lissajous_12}~(e)-(f) exhibits that even the {\it spontaneous-ordering} platoon cannot be guaranteed anymore, which implies that the robustness of the present DGVF~\eqref{desired_law} holds for the disturbances with intensities smaller than a threshold $d_i=[3,3,3]\t, i\in\mathbb{Z}_1^{10}$.

Additionally, we take
Fig.~\ref{distur_1_Lissajous_12} (e) as an illustrative example to analyze the states' evolution of the {\it spontaneous-ordering} platoon under constant disturbances $d_i=[3, 3, 3]\t, i\in\mathbb{Z}_1^{10}$. As shown in Fig.~\ref{Dis_Lissajous_performance1}, the path-following errors $\phi_{i,1}, \phi_{i,2}, \phi_{i,3}, i\in \mathbb{Z}_1^{10}$ oscillate sharply and deviate from zeros, which implies that Claim 1 in Definition~\ref{CPF_definition}) cannot be guaranteed. Moreover, it is observed in Fig.~\ref{Dis_Lissajous_performance2} that the derivative of virtual coordinate $\dot{\omega}_i(t), i\in{\cal V}$ oscillates around~$-1$, which implies that $\lim_{t\rightarrow\infty} \dot{\omega}_{i}(t)=\dot{\omega}_{k}(t)\neq 0, \forall i\neq k \in{\cal V}$ in Claim 2) of Definition~\ref{CPF_definition} does not hold. 
Moreover,  Fig.~\ref{Dis_Lissajous_performance2} exhibits that the final relative values of adjacent virtual coordinates oscillate and cannot satisfy $0.4<\lim_{t\rightarrow\infty} |\omega_{s[k]}(t)-\omega_{s[k+1]}(t)|<0.6, \forall k\in\mathbb{Z}_1^{10}$ in the zoomed-in panels $[24\mbox{s},25\mbox{s}]\times [0.3, 0.7]$ compared with Fig.~\ref{self_intersected_path2}, i.e., Claim 3) of Definition~\ref{CPF_definition} does not hold. Therefore, it concludes that DGVF \eqref{desired_law} fails to guarantee {\it spontaneous-ordering} platoon under external disturbances with intensities greater  than the threshold $d_i=[3,3,3]\t, i\in\mathbb{Z}_1^{10}$.

\section{Conclusion}
In this paper, we have presented a DGVF algorithm such
that multiple robots are capable of forming a {\it spontaneous-ordering} platoon and moving along a predefined desired path in the $n$-dimensional Euclidean space. 
In particular, we add the path parameter as a virtual coordinate for each robot and then interact 
with neighboring robots' virtual coordinates and a target virtual coordinate. In this way, the robots are governed to approach the desired path and achieve a platoon in an arbitrary ordering. 
The conditions are derived to guarantee the global convergence of the proposed DGVF subject to time-varying interaction topologies and external exponentially vanishing disturbances.
Moreover, the DGVF algorithm only requires low communication costs by transmitting only virtual coordinates among robots, which is desirable in real applications. 
2D multi-USV waterway navigation experiments and 3D numerical simulations have shown the effectiveness and robustness of the 
proposed DGVF even if some robots break down and  suffer from small disturbances. Future work will focus on string stability analysis of the {\it spontaneous-ordering} platoon with time-varying neighbors.

\appendices
\section{Proof of Lemma~\ref{lemma_finiteescape}}
\label{Proof_lemma_1}
First of all, recalling $|\omega_{i,k}(t)|>r, \forall t\geq 0, \;  \forall i\neq k \in{\cal V}$ in Claim 4) of Definition~\ref{CPF_definition}, one has that $\omega_{i,k}(t)\neq0, \omega_{i,k}(t)\neq r, \forall t>0$
can be guaranteed if Claim 4) holds, i.e., the finite-time-escape behavior is avoided. Then, we will prove $|\omega_{i,k}(t)|>r, \forall t\geq 0, \;  \forall i\neq k \in{\cal V}$ by contradiction.

Let $\widetilde{\omega}_i:= \omega_i-\omega^{\ast},$ be the coordinate error between the 
$i$-th virtual coordinate~$\omega_i$ and the target virtual 
coordinate $\omega^{\ast}$, 
$F_i:=[\partial f_{i,1}, \dots, \partial f_{i,n}]\t\in \mathbb{R}^{n},$ and $K_i:=\mbox{diag}\{k_{i,1}, $ $\dots, k_{i,n}\}\in \mathbb{R}^{n\times n}$, and substitute Eq.~\eqref{desired_law} into Eq.~\eqref{dynamic_path} yields
\begin{align}
\label{dynamic_path_4}
\begin{bmatrix}
   \dot{\Phi}_i\\
   \dot{\widetilde{\omega}}_i\\
\end{bmatrix} 
=& 
\begin{bmatrix}                      
    -K_i(I_n+F_iF_i\t)\Phi_i \\
    F_i\t K_i\Phi_i
\end{bmatrix}+
\begin{bmatrix}   
     -F_i(-c_i\widetilde{\omega}_i+\eta_i+e_i)+\widetilde{d}_i  \\
    -c_i\widetilde{\omega}_i+\eta_i+e_i\\
\end{bmatrix},                                   
\end{align}
where 
$ \Phi_i$ is
given in \eqref{dynamic_path}, 
$I_n\in \mathbb{R}^{n\times n}$ is 
an identity matrix, $\widetilde{d}_i:=\widehat{d}_i-d_i$ are the estimated disturbance errors and 
$e_i:=c_i(\widehat{\omega}_i-\omega^{\ast})$.
Recalling Remark~\ref{observer_disturbance}, conditions C3 and C5, one has  
\begin{align}
\label{convergence_e}
\lim_{t\rightarrow T_1} \widetilde{d}_i(t)=0~\mbox{and}~\lim_{t\rightarrow\infty}e_i(t)=0,
\end{align}
exponentially.

Since condition C1 ensures that $|\omega_{i,k}(0)|>r, \; \forall i\neq k \in {\cal V}$ at the initial time, we assume that there exists a finite time $T>0$ such that $|\omega_{i,k}(t)|>r, \; \forall i\neq k \in {\cal V}$ for $t\in[0, T)$ but not $t=T$, which implies that at least one pair of virtual coordinates satisfies 
\begin{align}
\label{lemma_assump1}
\omega_{i,k}(T)\leq r.
\end{align} 
During the time interval $t\in[0, T)$, the closed-loop system~\eqref{dynamic_path_4} is well defined due to the fact that $|\omega_{i,k}(t)|>r, \; \forall i\neq k \in {\cal V}$. Then, we can 
pick a candidate Lyapunov function 
\begin{align}
\label{V_1}
V(t)=&\frac{1}{2}\sum\limits_{i\in\mathcal{ V}}\bigg\{\Phi_i\t K_i\Phi_i+c_i\widetilde{\omega}_i^2\bigg\}+\sum\limits_{i\in\mathcal{V}}\sum\limits_{k\in\mathcal N_i}\int_{|\omega_{i,k}|}^{R}\alpha(\tau)d\tau,
\end{align} 
which is nonnegative and differentiable in $t\in[0, T)$. The partial derivatives of $V(t)$ w.r.t. $\Phi_{i}, \omega^{\ast}, \omega_i$ 
are, respectively,
\begin{align}
\label{partial_diff}
\frac{\partial V(t)}{\partial \Phi_{i}\t}
=&
\Phi_{i}\t K_{i}, \frac{\partial V(t)}{\partial \omega^{\ast}}=-c_i\widetilde{\omega}_i, \nonumber\\
\frac{\partial V(t)}{\partial \omega_i}
=&
c_i\widetilde{\omega}_i-\sum\limits_{k\in\mathcal N_i}\alpha(|\omega_{i,k}|)\frac{\omega_{i,k}}{|\omega_{i,k}|}
=c_i\widetilde{\omega}_i-\eta_i,
\end{align}
it follows from Eqs.~\eqref{dynamic_path_4} and~\eqref{partial_diff} that the time derivative of~$V(t)$ is 
\begin{align}
\label{dot_V}
\frac{dV}{dt}
=&
\sum_{i\in{\cal V}}\bigg\{\frac{\partial V}{\partial \Phi_{i}\t}\dot{\Phi}_{i}
			+\frac{\partial V}{\partial \omega_i}\dot{\omega}_i+\frac{\partial V}{\partial \omega^{\ast}}\dot{\omega}^{\ast}\bigg\}\nonumber\\
=&
\sum_{i\in{\cal V}}\bigg\{\Phi_{i}\t K_{i}\big(-K_i(I_n+F_iF_i\t)
\Phi_i -F_i(-c_i\widetilde{\omega}_i \nonumber\\
&
+\eta_i+e_i)+\widetilde{d}_i\big)+(c_i\widetilde{\omega}_i-\eta_i)\dot{\omega}_i
-c_i\widetilde{\omega}_i\dot{\omega}^{\ast}\bigg\}.
\end{align}
From the fact $\dot{\widetilde{\omega}}_i=\dot{\omega}_i-\dot{\omega}^{\ast}$, 
one has
\begin{align}
\label{change_1}
c_i\widetilde{\omega}_i\dot{\omega}_i
-
c_i\widetilde{\omega}_i\dot{\omega}^{\ast}
=c_i\widetilde{\omega}_i\dot{\widetilde{\omega}}_i.
\end{align} 
Meanwhile, it follows from the definition of $\alpha(s)$ in \eqref{alpha} that 
$\sum_{i\in{\cal V}}(\eta_i\dot{\omega}^{\ast})
=
\dot{\omega}^{\ast}\sum_{i\in{\cal V}}\eta_i=0$, 
which implies that
\begin{align}
\label{change_2}
\sum_{i\in{\cal V}}\eta_i\dot{\omega}_i
=
\sum_{i\in{\cal V}}\eta_i(\dot{\omega}_i-\dot{\omega}^{\ast})
=
\sum_{i\in{\cal V}}\eta_i\dot{\widetilde{\omega}}_i.
\end{align} 
Combining Eqs.~\eqref{change_1} and \eqref{change_2} together yields
\begin{align}
\label{property1}
\sum_{i\in{\cal V}}\Big\{(c_i\widetilde{\omega}_i
-\eta_i)\dot{\omega}_i-c_i\widetilde{\omega}_i
\dot{\omega}^{\ast}\Big\}
=
\sum_{i\in{\cal V}}(c_i\widetilde{\omega}_i
-\eta_i)\dot{\widetilde{\omega}}_i.
\end{align}
Substituting Eq.~\eqref{property1} and $\dot{\widetilde{\omega}}_i$ 
in Eq.~\eqref{dynamic_path_4} into Eq.~\eqref{dot_V} yields
\begin{align}
\label{dot_V1}
\frac{dV}{dt}
=&
\sum_{i\in{\cal V}}\bigg\{-\Phi_{i}\t K_iK_i\Phi_{i}-\Phi_{i}\t K_{i}\t F_iF_i\t K_i\Phi_i\nonumber\\
&
-(-c_i\widetilde{\omega}_i+\eta_i)^2 -2\Phi_{i}\t K_iF_i(-c_i\widetilde{\omega}_i+\eta_i)\nonumber\\
&
-\Phi_{i}\t K_iF_ie_i+\Phi_{i}\t K_{i}\widetilde{d}_i-(-c_i\widetilde{\omega}_i+\eta_i)e_i\bigg\}.
\end{align}
From the definition of 
$F_i, \Phi_i, K_i$ 
in \eqref{dynamic_path_4}, one has that 
$F_i\t K_i\Phi_i=\Phi_{i}\t K_{i}\t F_i$
is a scalar, which implies that
\begin{align}
\label{equal_proprty}
&
-\Phi_{i}\t K_{i}\t F_iF_i\t K_i\Phi_i-(-c_i\widetilde{\omega}_i+\eta_i)^2\nonumber\\
&
-2\Phi_{i}\t K_iF_i(-c_i\widetilde{\omega}_i+\eta_i)-\Phi_{i}\t K_iF_ie_i-(-c_i\widetilde{\omega}_i+\eta_i)e_i\nonumber\\
=&
-(\Phi_{i}\t K_iF_i-c_i\widetilde{\omega}_i+\eta_i
+\frac{e_i}{2})^2+\frac{e_i^2}{4}.
\end{align}
Moreover, one has
\begin{align}
\label{inequal_proprty}
\Phi_{i}\t K_{i} \widetilde{d}_i\leq\frac{\Phi_{i}\t K_{i}K_i\Phi_{i}}{2}+\frac{ \widetilde{d}_i\t  \widetilde{d}_i}{2}.
\end{align}
Then, it follows from Eqs.~\eqref{dot_V1},~\eqref{equal_proprty} and \eqref{inequal_proprty} that 
\begin{align}
\label{dot_V2}
\frac{dV(t)}{dt}
=&
-\sum_{i\in{\cal V}}\bigg\{\frac{\Phi_{i}\t K_i 
K_i\Phi_{i}}{2}+a_i^2\bigg\}+
\sum_{i\in{\cal V}}\bigg\{{\frac{e_i^2}{4}+\frac{ \widetilde{d}_i\t  \widetilde{d}_i}{2}}\bigg\}
\end{align}
with 
\begin{align}
\label{replace_val}
a_i:=\Phi_{i}\t K_iF_i-c_i\widetilde{\omega}_i+\eta_i+\frac{e_i}{2}.
\end{align}
From the condition of $\lim_{t\rightarrow T_1} \widetilde{d}_i(t)=0$ and $ \lim_{t\rightarrow\infty}e_i(t)=0, i\in {\cal V},$ 
exponentially in~\eqref{convergence_e}, one has $\lim_{t\rightarrow\infty}\sum_{i\in{\cal V}}$ ${e_i(t)^2}/{4}=0, \lim_{t\rightarrow\infty}\sum_{i\in{\cal V}}{ \widetilde{d}_i(t)\t \widetilde{d}_i(t)}/{2}=0$, which implies that there exists a constant $\delta>0$ such that 
\begin{align}
\label{bound_convergence}
\sum_{i\in{\cal V}}\bigg\{\frac{e_i(t)^2}{4}+\frac{ \widetilde{d}_i(t)\t  \widetilde{d}_i(t)}{2}\bigg\}\leq \delta, \forall t\in[0, T).
\end{align}
Let
\begin{align}
\label{value_replace}
\Omega:
=&
-\sum_{i\in{\cal V}}\bigg\{\frac{\Phi_{i}\t K_iK_i\Phi_{i}}{2}+a_i^2\bigg\}\leq 0,
\end{align}
it follows from Eqs.~\eqref{dot_V2}, \eqref{bound_convergence}, \eqref{value_replace} that
\begin{align*}
\frac{dV(t)}{dt}
\leq&
-\Omega+\delta,
\end{align*}
which implies 
\begin{align}
\label{intel_V}
0
\leq 
V(T)
\leq
\int_{0}^T\Omega(s)ds+\delta T+V(0)
\end{align}
according to the comparison principle \cite{khalil2002nonlinear}.
From Eq.~\eqref{value_replace}, one has $\int_{0}^T\Omega(s)ds\leq 0$. 
Moreover, since $\delta T$ and $V(0)$ are both bounded, So is $V(T)$.

However, recalling the assumption of $\omega_{i,k}(T)\leq r$ in \eqref{lemma_assump1}, it follows from Eq.~\eqref{alpha} that $\alpha(|\omega_{i,k}(T)|)=\infty$, which further implies that 
$V(T)=\infty$. It contradicts the bounded value $V(T)$ in~\eqref{intel_V}, which indicates that there exists no such a finite $T$ satisfying $\omega_{i,k}(T)\leq r, \forall i\neq k\in{\cal V}$ (i.e., $T=\infty$). Then, we conclude  
$|\omega_{i,k}(t)|>r, \forall t\geq 0, \;  \forall i\neq k \in{\cal V}$. The proof of Claim 4) is thus completed.


\bibliographystyle{IEEEtran}
\bibliography{IEEEabrv,ref}

\begin{thebibliography}{10}
\providecommand{\url}[1]{#1}
\csname url@samestyle\endcsname
\providecommand{\newblock}{\relax}
\providecommand{\bibinfo}[2]{#2}
\providecommand{\BIBentrySTDinterwordspacing}{\spaceskip=0pt\relax}
\providecommand{\BIBentryALTinterwordstretchfactor}{4}
\providecommand{\BIBentryALTinterwordspacing}{\spaceskip=\fontdimen2\font plus
\BIBentryALTinterwordstretchfactor\fontdimen3\font minus
  \fontdimen4\font\relax}
\providecommand{\BIBforeignlanguage}[2]{{%
\expandafter\ifx\csname l@#1\endcsname\relax
\typeout{** WARNING: IEEEtran.bst: No hyphenation pattern has been}%
\typeout{** loaded for the language `#1'. Using the pattern for}%
\typeout{** the default language instead.}%
\else
\language=\csname l@#1\endcsname
\fi
#2}}
\providecommand{\BIBdecl}{\relax}
\BIBdecl

\bibitem{macwan2014multirobot}
A.~Macwan, J.~Vilela, G.~Nejat, and B.~Benhabib, ``A multirobot path-planning
  strategy for autonomous wilderness search and rescue,'' \emph{IEEE
  Transactions on Cybernetics}, vol.~45, no.~9, pp. 1784--1797, 2014.

\bibitem{dunbabin2012robots}
M.~Dunbabin and L.~Marques, ``Robots for environmental monitoring: Significant
  advancements and applications,'' \emph{IEEE Robotics \& Automation Magazine},
  vol.~19, no.~1, pp. 24--39, 2012.

\bibitem{hu2021distributed2}
B.-B. Hu, Z.~Chen, and H.-T. Zhang, ``Distributed moving target fencing in a
  regular polygon formation,'' \emph{IEEE Transactions on Control of Network
  Systems}, vol.~9, no.~1, pp. 210--218, 2022.

\bibitem{alonso2015multi}
J.~Alonso-Mora, S.~Baker, and D.~Rus, ``Multi-robot navigation in formation via
  sequential convex programming,'' in \emph{Proceeding of International
  Conference on Intelligent Robots and Systems (IROS)}, 2015, pp. 4634--4641.

\bibitem{hu2023cooperative}
B.-B. Hu, H.-T. Zhang, and Y.~Shi, ``Cooperative label-free moving target
  fencing for second-order multi-agent systems with rigid formation,''
  \emph{Automatica}, vol. 148, p. 110788, 2023.

\bibitem{samson1995control}
C.~Samson, ``Control of chained systems application to path following and
  time-varying point-stabilization of mobile robots,'' \emph{IEEE Transactions
  on Automatic Control}, vol.~40, no.~1, pp. 64--77, 1995.

\bibitem{aguiar2007trajectory}
A.~P. Aguiar and J.~P. Hespanha, ``Trajectory-tracking and path-following of
  underactuated autonomous vehicles with parametric modeling uncertainty,''
  \emph{IEEE Transactions on Automatic Control}, vol.~52, no.~8, pp.
  1362--1379, 2007.

\bibitem{fossen2003line}
T.~I. Fossen, M.~Breivik, and R.~Skjetne, ``Line-of-sight path following of
  underactuated marine craft,'' \emph{IFAC Proceedings Volumes}, vol.~36,
  no.~21, pp. 211--216, 2003.

\bibitem{rysdyk2006unmanned}
R.~Rysdyk, ``Unmanned aerial vehicle path following for target observation in
  wind,'' \emph{Journal of Guidance, Control, and Dynamics}, vol.~29, no.~5,
  pp. 1092--1100, 2006.

\bibitem{kapitanyuk2017guiding}
Y.~A. Kapitanyuk, A.~V. Proskurnikov, and M.~Cao, ``A guiding vector-field
  algorithm for path-following control of nonholonomic mobile robots,''
  \emph{IEEE Transactions on Control Systems Technology}, vol.~26, no.~4, pp.
  1372--1385, 2017.

\bibitem{yao2020path}
W.~Yao and M.~Cao, ``Path following control in 3{D} using a vector field,''
  \emph{Automatica}, vol. 117, p. 108957, 2020.

\bibitem{yao2019distributed}
W.~Yao, H.~Lu, Z.~Zeng, J.~Xiao, and Z.~Zheng, ``Distributed static and dynamic
  circumnavigation control with arbitrary spacings for a heterogeneous
  multi-robot system,'' \emph{Journal of Intelligent \& Robotic Systems},
  vol.~94, no.~3, pp. 883--905, 2019.

\bibitem{burger2009straight}
M.~Burger, A.~Pavlov, E.~Borhaug, and K.~Y. Pettersen, ``Straight line path
  following for formations of underactuated surface vessels under influence of
  constant ocean currents,'' in \emph{Proceeding of American Control Conference
  (ACC)}, 2009, pp. 3065--3070.

\bibitem{ghommam2010formation}
J.~Ghommam, H.~Mehrjerdi, M.~Saad, and F.~Mnif, ``Formation path following
  control of unicycle-type mobile robots,'' \emph{Robotics and Autonomous
  Systems}, vol.~58, no.~5, pp. 727--736, 2010.

\bibitem{liu2020scanning}
B.~Liu, H.-T. Zhang, H.~Meng, D.~Fu, and H.~Su, ``Scanning-chain formation
  control for multiple unmanned surface vessels to pass through water
  channels,'' \emph{IEEE Transactions on Cybernetics}, vol.~52, no.~3, pp.
  1850--1861, 2020.

\bibitem{hu2021distributed1}
B.-B. Hu, H.-T. Zhang, B.~Liu, H.~Meng, and G.~Chen, ``Distributed surrounding
  control of multiple unmanned surface vessels with varying interconnection
  topologies,'' \emph{IEEE Transactions on Control Systems Technology},
  vol.~30, no.~1, pp. 400--407, 2021.

\bibitem{hu2021bearing}
B.-B. Hu and H.-T. Zhang, ``Bearing-only motional target-surrounding control
  for multiple unmanned surface vessels,'' \emph{IEEE Transactions on
  Industrial Electronics}, vol.~69, no.~4, pp. 3988--3997, 2021.

\bibitem{zhang2007coordinated}
F.~Zhang and N.~E. Leonard, ``Coordinated patterns of unit speed particles on a
  closed curve,'' \emph{Systems \& Control Letters}, vol.~56, no.~6, pp.
  397--407, 2007.

\bibitem{nakai2013vector}
K.~Nakai and K.~Uchiyama, ``Vector fields for uav guidance using potential
  function method for formation flight,'' in \emph{AIAA Guidance, Navigation,
  and Control (GNC) Conference}, 2013, p. 4626.

\bibitem{de2017circular}
H.~G. De~Marina, Z.~Sun, M.~Bronz, and G.~Hattenberger, ``Circular formation
  control of fixed-wing {UAV}s with constant speeds,'' in \emph{Proceeding of
  International Conference on Intelligent Robots and Systems (IROS)}, 2017, pp.
  5298--5303.

\bibitem{doosthoseini2015coordinated}
A.~Doosthoseini and C.~Nielsen, ``Coordinated path following for unicycles: A
  nested invariant sets approach,'' \emph{Automatica}, vol.~60, pp. 17--29,
  2015.

\bibitem{sabattini2015implementation}
L.~Sabattini, C.~Secchi, M.~Cocetti, A.~Levratti, and C.~Fantuzzi,
  ``Implementation of coordinated complex dynamic behaviors in multirobot
  systems,'' \emph{IEEE Transactions on Robotics}, vol.~31, no.~4, pp.
  1018--1032, 2015.

\bibitem{pimenta2013decentralized}
L.~C. Pimenta, G.~A. Pereira, M.~M. Gon{\c{c}}alves, N.~Michael, M.~Turpin, and
  V.~Kumar, ``Decentralized controllers for perimeter surveillance with teams
  of aerial robots,'' \emph{Advanced Robotics}, vol.~27, no.~9, pp. 697--709,
  2013.

\bibitem{yao2021distributed}
W.~Yao, H.~G. de~Marina, Z.~Sun, and M.~Cao, ``Distributed coordinated path
  following using guiding vector fields,'' in \emph{Proceeding of IEEE
  International Conference on Robotics and Automation (ICRA)}, 2021, pp.
  10\,030--10\,037.

\bibitem{yao2022guiding}
------, ``Guiding vector fields for the distributed motion coordination of
  mobile robots,'' \emph{IEEE Transactions on Robotics}, in press, doi:
  10.1109/TRO.2022.3224257, 2022.

\bibitem{sakurama2020multi}
K.~Sakurama and H.-S. Ahn, ``Multi-agent coordination over local indexes via
  clique-based distributed assignment,'' \emph{Automatica}, vol. 112, p.
  108670, 2020.

\bibitem{lan2011synthesis}
Y.~Lan, G.~Yan, and Z.~Lin, ``Synthesis of distributed control of coordinated
  path following based on hybrid approach,'' \emph{IEEE Transactions on
  Automatic Control}, vol.~56, no.~5, pp. 1170--1175, 2011.

\bibitem{reyes2014flocking}
L.~A.~V. Reyes and H.~G. Tanner, ``Flocking, formation control, and path
  following for a group of mobile robots,'' \emph{IEEE Transactions on Control
  Systems Technology}, vol.~23, no.~4, pp. 1268--1282, 2014.

\bibitem{ploeg2013controller}
J.~Ploeg, D.~P. Shukla, N.~van~de Wouw, and H.~Nijmeijer, ``Controller
  synthesis for string stability of vehicle platoons,'' \emph{IEEE Transactions
  on Intelligent Transportation Systems}, vol.~15, no.~2, pp. 854--865, 2013.

\bibitem{ploeg2013lp}
J.~Ploeg, N.~Van De~Wouw, and H.~Nijmeijer, ``Lp string stability of cascaded
  systems: Application to vehicle platooning,'' \emph{IEEE Transactions on
  Control Systems Technology}, vol.~22, no.~2, pp. 786--793, 2013.

\bibitem{besselink2017string}
B.~Besselink and K.~H. Johansson, ``String stability and a delay-based spacing
  policy for vehicle platoons subject to disturbances,'' \emph{IEEE
  Transactions on Automatic Control}, vol.~62, no.~9, pp. 4376--4391, 2017.

\bibitem{dunbar2011distributed}
W.~B. Dunbar and D.~S. Caveney, ``Distributed receding horizon control of
  vehicle platoons: Stability and string stability,'' \emph{IEEE Transactions
  on Automatic Control}, vol.~57, no.~3, pp. 620--633, 2011.

\bibitem{monteil2019string}
J.~Monteil, G.~Russo, and R.~Shorten, ``On ${L}_{\infty}$ string stability of
  nonlinear bidirectional asymmetric heterogeneous platoon systems,''
  \emph{Automatica}, vol. 105, pp. 198--205, 2019.

\bibitem{hu2020cooperative}
J.~Hu, P.~Bhowmick, F.~Arvin, A.~Lanzon, and B.~Lennox, ``Cooperative control
  of heterogeneous connected vehicle platoons: An adaptive leader-following
  approach,'' \emph{IEEE Robotics and Automation Letters}, vol.~5, no.~2, pp.
  977--984, 2020.

\bibitem{mokogwu2022energy}
C.~N. Mokogwu and K.~Hashtrudi-Zaad, ``Energy-based analysis of string
  stability in vehicle platoons,'' \emph{IEEE Transactions on Vehicular
  Technology}, vol.~71, no.~6, pp. 5915--5929, 2022.

\bibitem{xu2022stochastic}
L.~Xu, X.~Jin, Y.~Wang, Y.~Liu, W.~Zhuang, and G.~Yin, ``Stochastic stable
  control of vehicular platoon time-delay system subject to random switching
  topologies and disturbances,'' \emph{IEEE Transactions on Vehicular
  Technology}, vol.~71, no.~6, pp. 5755--5769, 2022.

\bibitem{feng2022robust}
F.~Gao, D.~Dang, and Y.~He, ``Robust coordinated control of nonlinear
  heterogeneous platoon interacted by uncertain topology,'' \emph{IEEE
  Transactions on Intelligent Transportation Systems}, vol.~23, no.~6, pp.
  4982--4992.

\bibitem{liu2020internal}
Y.~Liu, H.~Gao, C.~Zhai, and W.~Xie, ``Internal stability and string stability
  of connected vehicle systems with time delays,'' \emph{IEEE Transactions on
  Intelligent Transportation Systems}, vol.~22, no.~10, pp. 6162--6174, 2020.

\bibitem{qin2019experimental}
W.~B. Qin and G.~Orosz, ``Experimental validation of string stability for
  connected vehicles subject to information delay,'' \emph{IEEE Transactions on
  Control Systems Technology}, vol.~28, no.~4, pp. 1203--1217, 2019.

\bibitem{seron1999feedback}
M.~M. Seron, J.~H. Braslavsky, P.~V. Kokotovic, and D.~Q. Mayne, ``Feedback
  limitations in nonlinear systems: From bode integrals to cheap control,''
  \emph{IEEE Transactions on Automatic Control}, vol.~44, no.~4, pp. 829--833,
  1999.

\bibitem{yao2020vector}
W.~Yao, H.~G. de~Marina, and M.~Cao, ``Vector field guided path following
  control: Singularity elimination and global convergence,'' in
  \emph{Proceeding of IEEE Conference on Decision and Control (CDC)}, 2020, pp.
  1543--1549.

\bibitem{el2007passivity}
M.~I. El-Hawwary and M.~Maggiore, ``Passivity-based stabilization of
  non-compact sets,'' in \emph{Proceeding of IEEE Conference on Decision and
  Control (CDC)}, 2007, pp. 1734--1739.

\bibitem{yao2018robotic}
W.~Yao, Y.~A. Kapitanyuk, and M.~Cao, ``Robotic path following in 3{D} using a
  guiding vector field,'' in \emph{Proceeding of IEEE Conference on Decision
  and Control (CDC)}, 2018, pp. 4475--4480.

\bibitem{rezende2018robust}
A.~M. Rezende, V.~M. Gon{\c{c}}alves, G.~V. Raffo, and L.~C. Pimenta, ``Robust
  fixed-wing {UAV} guidance with circulating artificial vector fields,'' in
  \emph{Proceeding of International Conference on Intelligent Robots and
  Systems (IROS)}, 2018, pp. 5892--5899.

\bibitem{goncalves2010vector}
V.~M. Goncalves, L.~C. Pimenta, C.~A. Maia, B.~C. Dutra, and G.~A. Pereira,
  ``Vector fields for robot navigation along time-varying curves in $ n
  $-dimensions,'' \emph{IEEE Transactions on Robotics}, vol.~26, no.~4, pp.
  647--659, 2010.

\bibitem{yao2021singularity}
W.~Yao, H.~G. de~Marina, B.~Lin, and M.~Cao, ``Singularity-free guiding vector
  field for robot navigation,'' \emph{IEEE Transactions on Robotics}, vol.~37,
  no.~4, pp. 1206--1221, 2021.

\bibitem{galbis2012vector}
A.~Galbis and M.~Maestre, \emph{Vector analysis versus vector calculus}.\hskip
  1em plus 0.5em minus 0.4em\relax Springer Science \& Business Media, 2012.

\bibitem{chen2019cooperative}
Z.~Chen, ``A cooperative target-fencing protocol of multiple vehicles,''
  \emph{Automatica}, vol. 107, pp. 591--594, 2019.

\bibitem{peng2020output}
Z.~Peng, L.~Liu, and J.~Wang, ``Output-feedback flocking control of multiple
  autonomous surface vehicles based on data-driven adaptive extended state
  observers,'' \emph{IEEE Transactions on Cybernetics}, vol.~51, no.~9, pp.
  4611--4622, 2020.

\bibitem{olfati2004consensus}
R.~Olfati-Saber and R.~M. Murray, ``Consensus problems in networks of agents
  with switching topology and time-delays,'' \emph{IEEE Transactions on
  Automatic Control}, vol.~49, no.~9, pp. 1520--1533, 2004.

\bibitem{hong2006tracking}
Y.~Hong, J.~Hu, and L.~Gao, ``Tracking control for multi-agent consensus with
  an active leader and variable topology,'' \emph{Automatica}, vol.~42, no.~7,
  pp. 1177--1182, 2006.

\bibitem{zhao2013distributed}
Y.~Zhao, Z.~Duan, G.~Wen, and Y.~Zhang, ``Distributed finite-time tracking
  control for multi-agent systems: An observer-based approach,'' \emph{Systems
  \& Control Letters}, vol.~62, no.~1, pp. 22--28, 2013.

\bibitem{gu2022disturbance}
N.~Gu, D.~Wang, Z.~Peng, J.~Wang, and Q.-L. Han, ``Disturbance observers and
  extended state observers for marine vehicles: A survey,'' \emph{Control
  Engineering Practice}, vol. 123, p. 105158, 2022.

\bibitem{khalil2002nonlinear}
H.~K. Khalil, \emph{Nonlinear Systems}.\hskip 1em plus 0.5em minus 0.4em\relax
  Upper Saddle River, 2002.

\bibitem{dong2021anti1}
R.-Q. Dong, A.-G. Wu, and Y.~Zhang, ``Anti-unwinding sliding mode attitude
  maneuver control for rigid spacecraft,'' \emph{IEEE Transactions on Automatic
  Control}, vol.~67, no.~2, pp. 978--985, 2021.

\bibitem{dong2021anti2}
R.-Q. Dong, A.-G. Wu, Y.~Zhang, and G.-R. Duan, ``Anti-unwinding sliding mode
  attitude control via two modified rodrigues parameter sets for spacecraft,''
  \emph{Automatica}, vol. 129, p. 109642, 2021.

\bibitem{Liubin2018surounding}
B.~Liu, Z.~Chen, H.~Zhang, X.~Wang, T.~Geng, H.~Su, and J.~Zhao, ``Collective
  dynamics and control for multiple unmanned surface vessels,'' \emph{IEEE
  Transactions on Control Systems Technology}, vol.~28, no.~6, pp. 2540--2547,
  2020.

\end{thebibliography}

\end{document}